\def\eqref#1{equation~\ref{#1}}
\def\1{\bm{1}}
\def\va{{\bm{a}}}
\def\vh{{\bm{h}}}
\def\vo{{\bm{o}}}
\def\vp{{\bm{p}}}
\def\vu{{\bm{u}}}
\def\vw{{\bm{w}}}
\def\vx{{\bm{x}}}
\def\vy{{\bm{y}}}
\def\vz{{\bm{z}}}
\def\mM{{\bm{M}}}
\def\mO{{\bm{O}}}
\def\mW{{\bm{W}}}
\def\mX{{\bm{X}}}
\def\mY{{\bm{Y}}}
\def\mZ{{\bm{Z}}}
\DeclareMathAlphabet{\mathsfit}{\encodingdefault}{\sfdefault}{m}{sl}
\SetMathAlphabet{\mathsfit}{bold}{\encodingdefault}{\sfdefault}{bx}{n}
\def\sR{{\mathbb{R}}}
\DeclareMathOperator{\Tr}{Tr}
\newtheorem{lemma}{Lemma}
\newtheorem{theorem}{Theorem}
\title{Continuous Latent Process Flows}
\def\simga{{\sigma}}
\author{%
  Ruizhi Deng$^{1, 2}$\thanks{This work was done during an internship at Borealis AI. Correspondance to wsdmdeng@gmail.com.} ~~~~Marcus A. Brubaker$^{1, 3, 4}$~~~~Greg Mori$^{1, 2}$~~~~Andreas M. Lehrmann$^{1}$\\
  $^1$Borealis AI~~~~$^2$Simon Fraser University~~~~$^3$York University~~~~$^4$Vector Institute\\
}
\begin{document}

\maketitle

\begin{abstract}
  Partial observations of continuous time-series dynamics at arbitrary time stamps exist in many disciplines. Fitting this type of data using statistical models with continuous dynamics is not only promising at an intuitive level but also has practical benefits, including the ability to generate continuous trajectories and to perform inference on previously unseen time stamps. Despite exciting progress in this area, the existing models still face challenges in terms of their representation power and the quality of their variational approximations. We tackle these challenges with continuous latent process flows (CLPF), a principled architecture decoding continuous latent processes into continuous observable processes using a time-dependent normalizing flow driven by a stochastic differential equation. To optimize our model using maximum likelihood, we propose a novel piecewise construction of a variational posterior process and derive the corresponding variational lower bound using importance weighting of trajectories. An ablation study demonstrates the effectiveness of our contributions and comparisons to state-of-the-art baselines show our model's favourable performance on both synthetic and real-world data.
\end{abstract}

\section{Introduction}
\label{sec:intro}

Sparse and irregular observations of continuous dynamics are common in many areas of science, including finance~\cite{genccay2001introduction,zumbach2001operators}, healthcare~\cite{goldberger2000physiobank}, and physics~\cite{rehfeld2011comparison}.
Time-series models driven by stochastic differential equations (SDEs) provide an elegant framework for this challenging scenario and have recently gained popularity in the machine learning community~\cite{deng2020modeling, hasan2020identifying,li2020scalable}. The SDEs are typically implemented by neural networks with trainable parameters and the latent processes defined by the SDEs are then decoded into an observable space with complex structure. %
Due to the lack of closed-form transition densities for most SDEs, dedicated variational approximations have been developed to maximize the observational log-likelihoods~\cite{archambeau2007gaussian,hasan2020identifying,li2020scalable}.

Despite the progress of existing works, challenges still remain for SDE-based models to be applied to irregular time-series data. One major challenge is the model's representation power. The continuous-time flow process (CTFP;~\cite{deng2020modeling}) uses a series of invertible mappings continuously indexed by time to transform a simple Wiener process to a more complex stochastic process. 
The use of a simple latent process and invertible transformations permits CTFP models to evaluate the exact likelihood of observations on any time grid efficiently, but they also limit the set of stochastic processes that CTFP can express to some specific form which can be obtained using Ito's Lemma and excludes many commonly seen stochastic processes. 
Another constraint of representation power in practice is the Lipschitz property of transformations in the models. The latent SDE model proposed by~\citet{hasan2020identifying} and CTFP both transform a latent stochastic process with constant variance to an observable one using injective mappings. Due to the Lipschitz property existing in many invertible neural network architectures, some processes that can be written as a non-Lipschitz transformation of a simple process, like geometric Brownian motion, cannot be expressed by these models unless specific choices of non-Lipschitz decoders are used.
Apart from the model's representation power, variational inference is another challenge in training SDE-based models. The latent SDE model in the work of~\citet{li2020scalable} uses a principled method of variational approximation based on importance weighting of trajectories between a variational posterior and a prior process. The variational posterior process is constructed using a single SDE conditioned on all observations and is therefore restricted to be a Markov process. This approach may lack the flexibility to approximate the true posterior process well enough in complex inference tasks, \mbox{e.g., in an online setting or with  variable-length observation sequences.}

In this work we propose Continuous Latent Process Flows (CLPFs)\footnote{Code available at \url{https://github.com/BorealisAI/continuous-latent-process-flows}}, a model that is governed by latent dynamics defined by an expressive generic stochastic differential equation. Inspired by~\cite{deng2020modeling}, we then use dynamic normalizing flows to decode each latent trajectory into a continuous observable process.
Driven by different trajectories of the latent stochastic process continuously evolving with time, the dynamic normalizing flow can map a simple base process to a diverse class of observable processes. We illustrate this process in~Fig.~\ref{fig:page1}.
This decoding is critical for the model to generate continuous trajectories and be trained to fit observations on irregular time grids using a variational approximation. 
Good variational approximations and proper handling of complex inference tasks like online inference depend on a flexible variational posterior process.
Therefore, we also propose a principled method of defining and sampling from a non-Markov variational posterior process that is based on a piecewise evaluation of SDEs and can adapt to new observations. The proposed model excels at fitting observations on irregular time grids, generalizing to observations on more dense time grids, and generating trajectories continuous in time. 

\begin{figure}[t]
\vspace*{-1cm}
 \centering
 \includegraphics[width=0.9\linewidth]{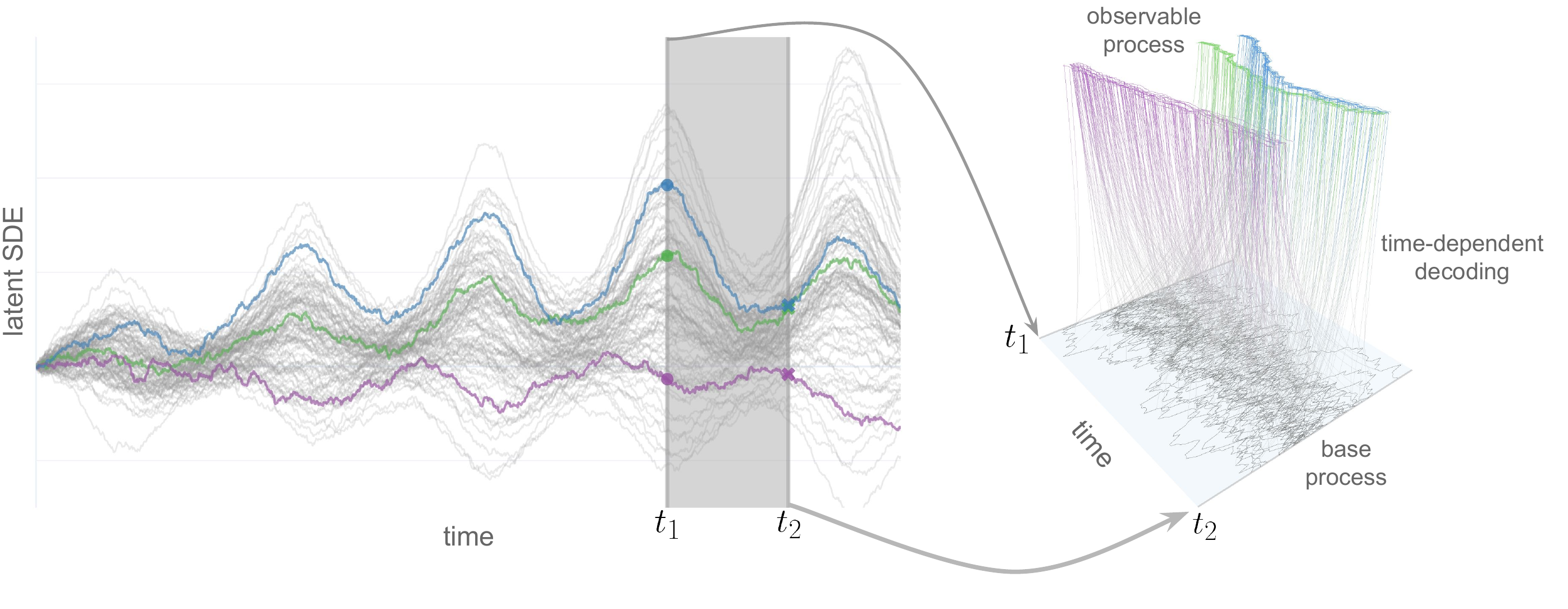}
 \caption{\small {\bf Overview.} Our architecture uses a stochastic differential equation (SDE; left) to drive a time-dependent normalizing flow (NF; right). At time $t1$, $t2$ (grey bars), the values of the SDE trajectories (colored trajectories on the left) serve as conditioning information for the decoding of a simple base process (grey trajectories on the right) into a complex observable process (colored trajectories on the right). The color gradient on the right shows the individual trajectories of this transformation, which is driven by an augmented neural ODE. Since all stochastic processes and mappings are time-continuous, we can model observed data as partial realizations of a continuous process, \mbox{enabling modelling of continuous dynamics and inference on irregular time grids.}}
 \vspace{-0.3cm}
 \label{fig:page1}
\end{figure}

\paragraph{Contributions.} In summary, we make the following contributions: 
(1) We propose a flow-based decoding of a generic SDE as a principled framework for continuous dynamics modeling of irregular time-series data.
(2) We improve the variational approximation of the observational likelihood through a flexible non-Markov posterior process based on a piecewise evaluation of the underlying SDE; 
(3) We validate the effectiveness of our contributions in a series of ablation studies and comparisons to state-of-the-art time-series models, both on synthetic and real-world datasets. 

\section{Preliminaries}
\label{sec:pre}

\subsection{Stochastic Differential Equations}\label{sec:SDE}
SDEs can be viewed as a stochastic analogue of ordinary differential equations (ODEs) in the sense that  $\frac{\textrm{d}\mZ_t}{\textrm{d}t} = \bm{\mu}(\mZ_t, t) + \text{random noise} \cdot \bm{\sigma}(\mZ_t, t)$.  
Let $\mZ_t$ be a variable which continuously evolves with time.
An $m$-dimensional SDE describing the stochastic dynamics of $\mZ_t$ usually takes the form
\begin{equation}
    \textrm{d}\mZ_t = \bm{\mu}(\mZ_t, t)\ \textrm{d}t + \bm{\sigma}(\mZ_t, t)\ \textrm{d}\mW_t,
\end{equation}
where $\bm{\mu}$ maps to an $m$-dimensional vector, $\bm{\sigma}$ is an $m\times k $ matrix, and $\mW_t$ is a $k$-dimensional Wiener process. The solution of an SDE is a continuous-time stochastic process $\mZ_t$ that satisfies the integral equation $\label{eq:integralForm}
    \mZ_t = \mZ_0 + \int_{0}^t \bm{\mu}(\mZ_s, s)\ \textrm{d}s + \int_{0}^t \bm{\sigma}(\mZ_s, s)\ \textrm{d}\mW_s$ with initial condition $\mZ_0$,
where the stochastic integral should be interpreted as a traditional It\^{o} integral~\cite[Chapter~3.1]{oksendal2013stochastic}. For each sample trajectory $\bm{\omega} \sim \mW_t$, the stochastic process $\mZ_t$ maps $\bm{\omega}$ to a different trajectory $\mZ_t(\bm{\omega})$.

\paragraph{Latent Dynamics and Variational Bound.} SDEs have been used as models of latent dynamics in a variety of contexts~\cite{archambeau2007gaussian,hasan2020identifying,li2020scalable}. As closed-form finite-dimensional solutions to SDEs are rare, variational approximations are often used in practice. \citet{li2020scalable} propose a principled way of re-weighting latent SDE trajectories for variational approximations using Girsanov's theorem~\cite[Chapter 8.6]{oksendal2013stochastic}. Specifically, consider a prior process and a variational posterior process in the interval $[0, T]$ defined by two stochastic differential equations $ \textrm{d}\mZ_t = \bm{\mu_1}(\mZ_t, t)\ \textrm{d}t + \bm{\sigma}(\mZ_t, t)\ \textrm{d}\mW_t$ and $\textrm{d}\hat{\mZ}_t = \bm{\mu_2}(\hat{\mZ}_t, t)\ \textrm{d}t + \bm{\sigma}(\hat{\mZ}_t, t)\ \textrm{d}\mW_t$, respectively. Furthermore, let $p(\bm{x}|\mZ_t)$ denote the probability of observing $\bm{x}$ conditioned on the trajectory of the latent process $\mZ_t$ in the interval $[0, T]$. If there exists a mapping $\bm{u}:\mathbb{R}^m\times[0, T]\rightarrow\mathbb{R}^k$ such that $\bm{\sigma}(\bm{z},t)\bm{u}(\bm{z},t) = \bm{\mu_2}(\bm{z}, t) - \bm{\mu_1}(\bm{z}, t)$ and $\bm{u}$ satisfies Novikov's condition~\cite[Chapter 8.6]{oksendal2013stochastic},
we obtain the variational lower bound
\begin{equation}
    \log p(\bm{x}) = \log\mathbb{E}[p(\bm{x}|\mZ_t)]
                   = \log\mathbb{E}[p(\bm{x}|\hat{\mZ}_t)\mM_T]
                   \geq\mathbb{E}[\log p(\bm{x}|\hat{\mZ}_t) + \log \mM_T],
\end{equation}
where $\mM_T=\exp(-\int_0^T\frac{1}{2}\left|\bm{u}(\hat{\mZ}_t,t)\right|^2\ \textrm{d}t-\int_0^T\bm{u}(\hat{\mZ}_t,t)^T\ \textrm{d}\mW_t)$. See \cite{li2020scalable} for a formal proof. 

\subsection{Normalizing Flows}
\label{sec:cont_flow}
Normalizing flows~\citep{behrmann2019invertible,chen2019residual,dinh2014nice,dinh2017density,kingma2018glow,kingma2016improved,kobyzev2019normalizing,papamakarios2017masked,papamakarios2019normalizing,rezende2015variational} employ a bijective mapping $f: \sR^d \to \sR^d$ to transform a random variable $\mY$ with a simple base distribution $p_\mY$ to a random variable $\mX$ with a complex target distribution $p_\mX$. We can sample from a normalizing flow by first sampling $\vy\sim p_\mY$ and then transforming it to $\vx=f(\vy)$. As a result of invertibility, normalizing flows can also be used for density estimation. Using the change-of-variables formula, we have $\log p_\mX(\vx) = \log p_\mY(g(\vx))
+
\log \left|\det 
\left( 
\frac{\partial g}{\partial \vx} 
\right) 
\right|$, where $g$ is the inverse of $f$.

\paragraph{Continuous Indexing.}
More recently, normalizing flows have been augmented with a continuous index~\cite{caterini2020variational,cornish2020relaxing,deng2020modeling}. For instance, the continuous-time flow process (CTFP;~\cite{deng2020modeling}) models irregular observations of a continuous-time stochastic process. Specifically, CTFP transforms a simple $d$-dimensional Wiener process $\mW_t$ to another continuous stochastic process $\mX_t$ using the transformation $\mX_t = f(\mW_t, t)$,
where $f(\vw, t)$ is an invertible mapping for each $t$. Despite its benefits of exact log-likelihood computation of arbitrary finite-dimensional distributions, the expressive power of CTFP to model stochastic processes is limited in the following two aspects:
(1) An application of It\^{o}'s lemma~\citep[Chapter 4.2]{oksendal2013stochastic} shows that CTFP can only represent stochastic processes of the form
\begin{equation}
    \textrm{d}f(\mW_t, t) = \{\frac{\textrm{d}f}{\textrm{d}t}(\mW_t, t) + \frac{1}{2}\Tr({\bf H}_{\vw}f(\mW_t, t))\}\ \textrm{d}t + (\nabla_{\vw} f^T(\mW_t, t))^T\ \textrm{d}\bm{W}_t,
\end{equation}
where ${\bf H}_{\vw}f$ is the Hessian matrix of $f$ with respect to $\vw$ and $\nabla_{\vw} f$ is the derivative. A variety of stochastic processes, from simple processes like the Ornstein-Uhlenbeck (OU) process to more complex non-Markov processes, fall outside of this limited class and cannot be learned using CTFP (see Appendix~\ref{append:A} for formal proofs);
(2) Many normalizing flow architectures are compositions of Lipschitz-continuous transformations~\cite{chen2018neural,chen2019residual,grathwohl2018scalable}. It is therefore challenging to model certain stochastic processes that are non-Lipschitz transformations of simple processes using CTFP without prior knowledge about the functional form of the observable processes and custom-tailored normalizing flows with non-Lipschitz transformations (see Appendix~\ref{append:B} for an example).

A latent variant of CTFP is further augmented with a static latent variable to introduce non-Markovian behavior. It models continuous stochastic processes as $\mX_t = f(\mW_t, t; \mZ)$, where $\mZ$ is a latent variable with standard Gaussian distribution and $f(\cdot, \cdot; z)$ is a CTFP model that decodes each sample $z$ of $\mZ$ into a stochastic process with continuous trajectories. Latent CTFP can be used to estimate finite-dimensional distributions using a variational approximation. However, it is unclear how much a latent variable $\mZ$ with finite dimensions can improve CTFPs representation power.
\section{Model}
\label{sec:model}
Equipped with these tools, we can now describe our problem setting and model architecture. Let $\{(\vx_{t_i}, t_i)\}_{i=1}^n$ denote a sequence of $d$-dimensional observations sampled at arbitrary points in time, where $\vx_{t_i}$ and $t_i$ denote the value and time stamp of the observation, respectively. The observations are assumed to be partial realizations of a continuous-time stochastic process $\mX_t$. Our training objective is the maximization of the observational log-likelihood induced by $\mX_t$ on a given time grid,
\begin{equation}\label{eq:ll}
    \mathcal{L} = \log p_{\mX_{t_1},\dots,\mX_{t_n}}(\vx_{t_1},\dots, \vx_{t_n}),
\end{equation}
for an inhomogeneous collection of sequences with varying lengths and time stamps. At test-time, in addition to the maximization of log-likelihoods, we are also interested in sampling sparse, dense, or irregular trajectories with finite-dimensional distributions that conform with these log-likelihoods. We model this challenging scenario with Continuous Latent Process Flows (CLPF). In Section~\ref{ssec:lsde}, we present our model in more detail. Training and inference methods will be discussed in Section~\ref{ssec:inf}.

\subsection{Continuous Latent Process Flows}
\label{ssec:lsde}
A Continuous Latent Process Flow consists of two major components: an SDE describing the continuous latent dynamics of an observable stochastic process and a continuously indexed normalizing flow serving as a time-dependent decoder. The architecture of the normalizing flow itself can be specified in multiple ways, e.g., as an augmented neural ODE~\cite{dupont19} or as a series of affine transformations~\cite{cornish2020relaxing}. The following paragraphs discuss the relationship between these components in more detail.

\paragraph{Continuous Latent Dynamics.} Analogous to our overview in Section~\ref{sec:SDE}, we model the evolution of an $m$-dimensional time-continuous latent state $\mZ_t$ in the time interval $[0,T]$ using a flexible stochastic differential equation driven by an $m$-dimensional Wiener Process $\mW_t$,
\begin{equation}
    \textrm{d}\mZ_t = \bm{\mu}_\gamma(\mZ_t, t)\ \textrm{d}t + \bm{\sigma}_\gamma(\mZ_t, t)\ \textrm{d}\mW_t,
    \label{eq:sde}
\end{equation} where $\gamma$ denotes the (shared) learnable parameters of the drift function $\bm{\mu}$ and variance function $\bm{\sigma}$. In our experiments, we implement $\bm{\mu}$ and $\bm{\sigma}$ using deep neural networks (see Appendix~\ref{append:E} for details).
Importantly, the latent state $\mZ_t$ exists for each $t \in [0,T]$ and can be sampled on any given time grid, which can be irregular and different for each sequence.

\paragraph{Time-Dependent Decoding.} Latent variable models decode a latent state into an observable variable with complex distribution. As an observed sequence $\{(\vx_{t_i}, t_i)\}_{i=1}^n$ is assumed to be a partial realization of a continuous-time stochastic process, continuous trajectories of the latent process $\mZ_t$ should be decoded into continuous trajectories of the observable process $\mX_t$, and not discrete distributions. Following recent advances in dynamic normalizing flows~\cite{caterini2020variational,cornish2020relaxing,deng2020modeling}, we model $\mX_t$ as 
\begin{equation}\label{eq:decoding}
    \mX_t =F_\theta(\mO_t; \mZ_t, t),
\end{equation}
where $\mO_t$ is a $d$-dimensional stochastic process with closed-form transition density\footnote{More precisely, the conditional distribution $p_{{\bf O}_{{t_i}}|{\bf O}_{t_j}} ({\bf o}_{t_i}|{\bf o}_{t_j})$ must exist in closed form for any $t_j<t_i$.} and $F_\theta(\:\cdot\:;\vz_t, t)$ is a normalizing flow parameterized by $\theta$ for any $\vz_t, t$. The transformation $F_\theta$ decodes each sample path of $\mZ_t$ into a complex distribution over continuous trajectories $\mX_t$ if $F_\theta$ is a continuous mapping and the sampled trajectories of the base process $\mO_t$ are continuous with respect to time $t$.
Unlike~\cite{deng2020modeling}, who use a Wiener process as base process, we use the Ornstein–Uhlenbeck (OU) process, which has a stationary marginal distribution and bounded variance. As a result, the variance of the observation process does not increase due to the increase of variance in the base process and is primarily determined by the latent process $\mZ_t$ and flow transformation $F_\theta$. 

\paragraph{Flow Architecture.} The continuously indexed normalizing flow $F_\theta(\:\cdot\:; \vz_t, t)$ can be implemented in multiple ways. \citet{deng2020modeling} use ANODE~\cite{dupont19}, defined as the solution to the initial value problem
\begin{equation}
  \begin{split}
    \frac
    { \textrm{d} } 
    { \textrm{d}\tau }
    \begin{pmatrix}
    \vh(\tau) 
    \\
    \va(\tau)    
    \end{pmatrix}
    =
    \begin{pmatrix}
    f_\theta ( \vh(\tau), \va(\tau), \tau)
    \\
    g_\theta ( \va(\tau), \tau)
    \end{pmatrix},
    \quad
    \begin{pmatrix}
    \vh(\tau_0) 
    \\
    \va(\tau_0)    
    \end{pmatrix}
    =
    \begin{pmatrix}
    \vo_t 
    \\
    (\vz_t, t)^T   
    \end{pmatrix},
  \end{split}
\label{eq:ivp}
\end{equation}
where $\tau \in [\tau_0, \tau_1]$, $\vh(\tau) \in \sR^d$, $\va(\tau) \in \sR^{m+1}$, $f_\theta: \sR^d \times \sR^{m+1} \times [\tau_0, \tau_1] \to \sR^d$, $g_\theta: \sR^{m+1} \times [t_0, t_1] \to \sR$, and $F_\theta$ is defined as the solution of $\vh(\tau)$ at $\tau=\tau_1$. Note the difference between $t$ and $\tau$: while $t\in[0,T]$ describes the continuous process dynamics, $\tau\in [\tau_0, \tau_1]$ describes the continuous time-dependent decoding at each time step $t$.  Alternatively, \citet{cornish2020relaxing} propose a variant of continuously indexed normalizing flows based on a series of $N$ affine transformations $f_i$,
\begin{equation}
    \vh_t^{(i+1)} = f_i(\vh_t^{(i)}; \vz_t, t) = k(\vh_t^{(i)} \cdot \exp(-u^{(i)}(\vz_t, t)) -v^{(i)}(\vz_t ,t)),
    \label{eq:ivp_res}
\end{equation}
where $\vh_t^{(0)}=\vo_t$, $\vh_t^{(N)}=\vx_t$, $u^{(i)}$ and $v^{(i)}$ are unconstrained transformations, and $k$ is an invertible mapping like a residual flow~\cite{chen2019residual}. The temporal relationships among $\mZ_t$, $\mO_t$, and $\mX_t$ from a graphical model point of view are shown in Fig.~\ref{fig:pgm_generation}.

\subsection{Training and Inference}
\label{ssec:inf}
\begin{figure*}
    \centering
    \begin{subfigure}[b]{0.45\textwidth}
        \centering
        \includegraphics[width=\textwidth]{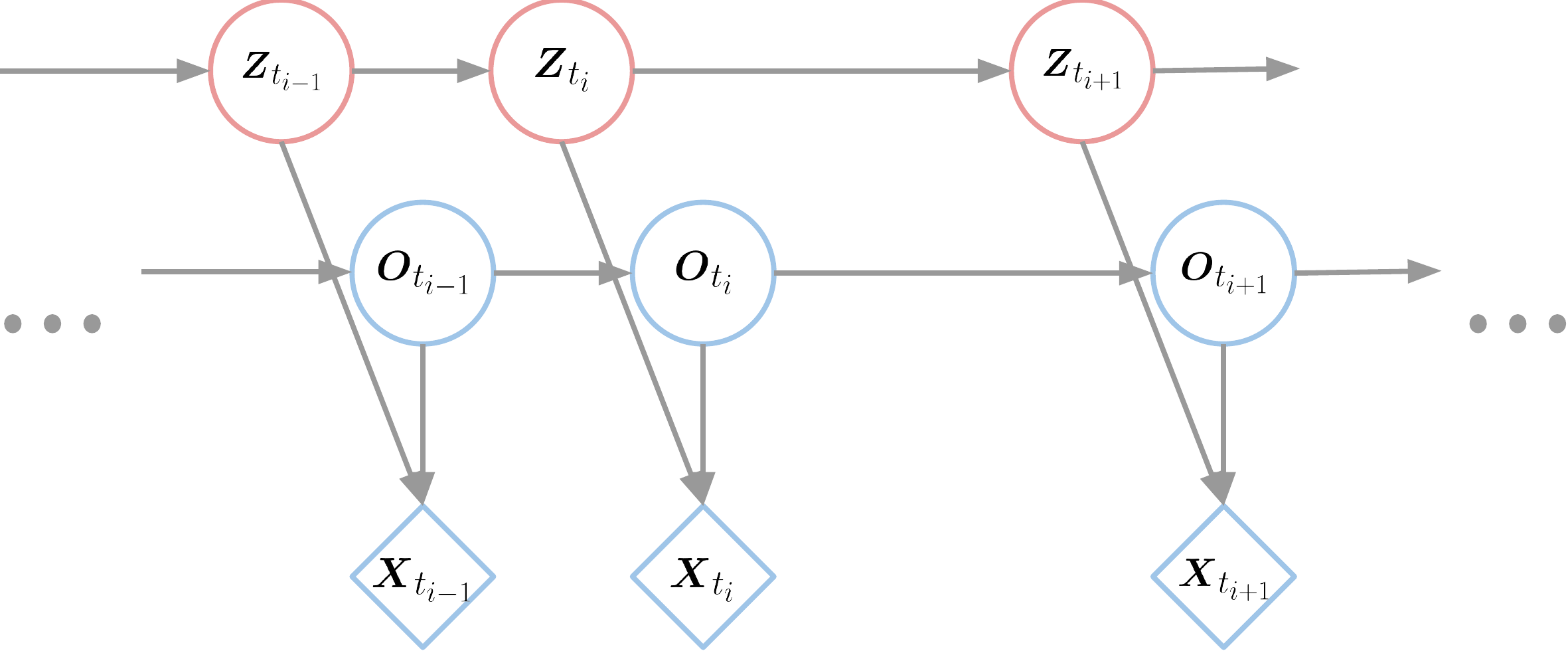}
        \caption{Generation}
        \label{fig:pgm_generation}
    \end{subfigure}
    \quad
    \begin{subfigure}[b]{0.45\textwidth}
        \centering
        \includegraphics[width=\textwidth]{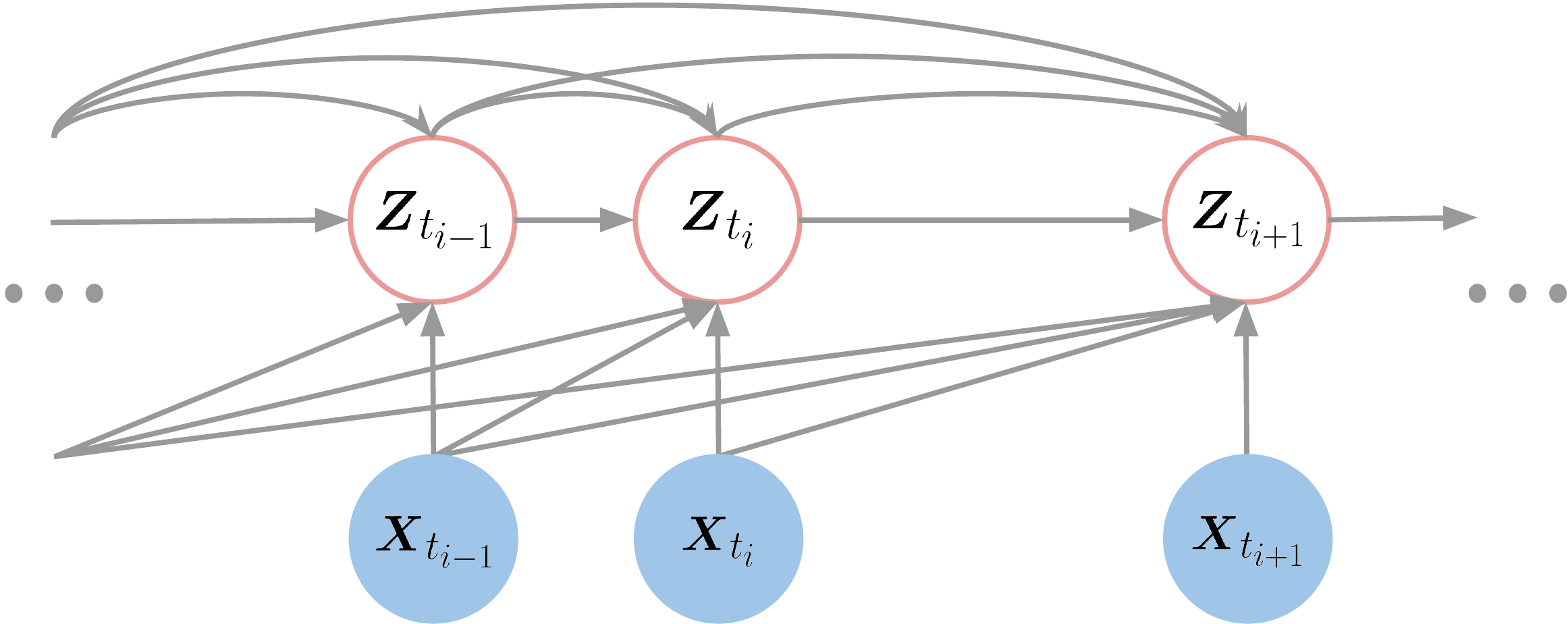}
        \caption{Inference}
        \label{fig:pgm_inference}
    \end{subfigure}
    \caption{\small {\bf Graphical Model of Generation and Inference.} Red circles represent latent variables $\mZ_{t_i}$. Unfilled blue circles represent samples from the OU process at discrete time points $\mO_{t_i}$. Blue diamonds in Fig.~\ref{fig:pgm_generation} indicate each $\mX_{t_i}$ is the result of a deterministic mapping of $\mZ_{t_i}$ and $\mO_{t_i}$. Filled blue circles in Fig.~\ref{fig:pgm_inference} represent observed variables $\mX_{t_i}$.}
    \label{fig:PGM}
    \vspace{-10pt}
\end{figure*}
With the model fully specified, we can now focus our attention on training and inference. Computing the observational log-likelihood (Eq.(\ref{eq:ll})) induced by a time-dependent decoding of an SDE (Eq.(\ref{eq:decoding})) on an arbitrary time grid is challenging, because only few SDEs have closed-form transition densities. Consequently, variational approximations are needed for flexible SDEs such as Eq.(\ref{eq:sde}). We propose a principled way of approximating the observational log-likelihood with a variational lower bound based on a novel piecewise construction of the posterior distribution of the latent process. In summary, we first express the observational log-likelihood as a marginalization over piecewise factors conditioned on a latent trajectory, then approximate this intractable computation with a piecewise variational posterior process, and finally derive a variational lower bound for it.

\paragraph{Observational Log-Likelihood.} The observational log-likelihood can be written as an expectation over latent trajectories of the conditional likelihood, which can be evaluated in closed form, 
\begin{align}
    \begin{split}
    \mathcal{L} &= \log p_{\mX_{t_1},\dots,\mX_{t_n}}(\vx_{t_1},\dots, \vx_{t_n}) \\
    &=\log \mathbb{E}_{\omega\sim\mW_t}\left[ p_{\mX_{t_1},\dots,\mX_{t_n}|\mZ_t}(\vx_{t_1},\dots, \vx_{t_n}|\mZ_t(\omega)) \right] \\
    &=\log \mathbb{E}_{\omega\sim\mW_t} \left[\prod_{i=1}^n p_{\mX_{t_i}|\mX_{t_{i-1}}, \mZ_{t_i},\mZ_{t_{i-1}}}(\vx_{t_i}|\vx_{t_{i-1}}, \mZ_{t_i}(\omega),\mZ_{t_{i-1}}(\omega))\right],
    \end{split}
\end{align} 
where $\mZ_t(\omega)$ denotes a sample trajectory of $\mZ_t$ driven by $\omega\sim\mW_t$. For simplicity, we assume w.l.o.g.~and in this section only $\mZ_{0},\mX_{0}$ to be given. As a result of invertibility, the conditional likelihood terms $p_{\mX_{t_i}|\mX_{t_{i-1}}, \mZ_{t_i},\mZ_{t_{i-1}}}$ can be computed using the change-of-variables formula, 
\begin{align}
\begin{split}
  &\log p_{\mX_{t_i}|\mX_{t_{i-1}}, \mZ_{t_i},\mZ_{t_{i-1}}}(\vx_{t_i}|\vx_{t_{i-1}}, \mZ_{t_i}(\omega),\mZ_{t_{i-1}}(\omega))\\
=
&\log p_{ \mO_{t_i} | \mO_{t_{i-1}} }
(\vo_{t_i} | \vo_{t_{i-1}})
- \log\left|\det
\frac
{\partial F_{\theta}(\vo_{t_i}; \mZ_{t_i}(\omega),t_i)}
{\partial \vo_{t_i}}
\right|
\end{split},
\end{align}
where $\vo_{t_i} = F_\theta^{-1}(\vx_{t_i}; \mZ_{t_i}(\omega), t_i)$. 

\paragraph{Piecewise Construction of Variational Posterior.} Directly computing the observational log-likelihood is intractable and
we use a variational approximation during both training and inference. Good variational approximations rely on variational posteriors that are close enough to the true posterior of the latent trajectory conditioned on observations. Previous methods~\cite{li2020scalable} use a single SDE to propose the variational posterior conditioned on all observations.
Instead, we develop a more flexible method that can update the posterior process parameters when a new observation is seen and naturally adapts to different time grids. Our posterior process is not constrained by the Markov property of SDE solutions. Moreover, the proposed method serves as the basis for a principled approach to \mbox{online inference tasks using variational posterior processes.}

Our construction makes use of a further decomposition of the observational log-likelihood based on the following facts: $\{\mW_{s+t} - \mW_s\}_{t\geqslant0}$ is also a Wiener process $\forall s \geqslant 0$ and the solution to Eq.(\ref{eq:sde}) is a Markov process.
Specifically, let $\{(\bm{\Omega}^{(i)}, \bm{\mathcal{F}}^{(i)}_{t_i - t_{i-1}}, P^{(i)})\}_{i=1}^n$ %
be a series of probability spaces on which $n$ independent $m$-dimensional Wiener processes $\mW^{(i)}_{t}$ are defined. %
We can sample a complete trajectory of the Wiener process $\mW_t$ in the interval $[0,T]$ by sampling independent trajectories of length $t_i-t_{i-1}$ from $\bm{\Omega}^{(i)}$ and adding them, i.e., $\omega_t=\sum_{\{i: t_i < t\}}\omega^{(i)}_{t_i-t_{i-1}} + \omega^{(i^*)}_{t-t_{i^*-1}}$, where $i^* = \max\{i: t_i < t\}+1$.
As a result, we can solve the latent stochastic differential equations in a piecewise manner. Samples of $\mZ_{t_i}$
can be obtained by solving the  stochastic differential equation 
\begin{equation}
    \textrm{d}\widehat{\mZ}_t = \bm{\mu}_\gamma(\widehat{\mZ}_t, t+t_{i-1})\ \textrm{d}t + \bm{\sigma}_\gamma(\widehat{\mZ}_t, t+t_{i-1})\ \textrm{d}\mW^{(i)}_{t}
    \label{eq: prior_sde}
\end{equation}
in the interval $[0, t_i - t_{i-1}]$, with $\mZ_{t_{i-1}}$ as the initial condition.
The log-likelihood of the observations can thus be rewritten as 
\begin{align}\label{eq:ll_decomp}
\begin{split}
    \mathcal{L} &= \log \mathbb{E}_{\omega^{(1)},\dots, \omega^{(n)} \sim \mW_t^{(1)}\times\dots\times \mW_t^{(n)}} \left[\prod_{i=1}^n p(\vx_{t_i}|\vx_{t_{i-1}}, \vz_{t_i},\vz_{t_{i-1}}, \omega^{(i)})\right] \\
    &=\log \mathbb{E}_{\omega^{(1)}\sim \mW_t^{(1)}}\left[p(\vx_{t_1}|\vx_{t_0}, \vz_{t_1},\vz_{t_0}, \omega^{(1)}) \dots \right. \\
    & \phantom{XXXXXXXXX} \mathbb{E}_{\omega^{(i)}\sim \mW_t^{(i)}}\left[p(\vx_{t_i}|\vx_{t_{i-1}}, \vz_{t_i},\vz_{t_{i-1}}, \omega^{(i)}) \mathbb{E}_{\omega^{(i+1)}\sim \mW_t^{(i+1)}}[\dots]\dots\right].
\end{split}
\end{align}
It is worth noting that the value of $\vx_{t_i}$ is determined by $\vz_{t_i}$ and $\vo_{t_i}$. With the distribution of $\mO_{t_i}$ depending on the value of $\vo_{t_{i-1}}$, and thus on $\vz_{t_{i-1}}$ and $\vx_{t_{i-1}}$, after marginalizing over $\vo_{t_i}$ each conditional likelihood term in Eq.(\ref{eq:ll_decomp}) is conditioned on $\vx_{t_{i-1}}$, $\vz_{t_i}$, and $\vz_{t_{i-1}}$. For simplicity we also omit conditioning of each term on Wiener process trajectories up to $t_{i-1}$, i.e., $\omega^{(1)}, \omega^{(2)}, \dots, \omega^{(i)}$. In preparation of our variational approximation we can now introduce one posterior SDE 
\begin{equation}
    \textrm{d}\Tilde{\mZ}_t = \bm{\mu}_{\phi_i}(\Tilde{\mZ}_t, t+t_{i-1})\ \textrm{d}t + \bm{\sigma}_\gamma(\Tilde{\mZ}_t, t+t_{i-1})\ \textrm{d}\mW^{(i)}_{t}
    \label{eq: post_sde}
\end{equation}
for each expectation $\mathbb{E}_{\omega^{(i)}\sim \mW_t^{(i)}}\left[p(\vx_{t_i}|\vx_{t_{i-1}}, \vz_{t_i},\vz_{t_{i-1}}, \omega^{(i)}) \mathbb{E}_{\omega^{(i+1)}\sim \mW_t^{(i+1)}}[\dots]\dots\right ]$ in Eq.(\ref{eq:ll_decomp}).

\paragraph{Variational Lower Bound with Piecewise Importance Weighting.} The posterior SDEs in Eq.(\ref{eq: post_sde}) form the basis for a variational approximation of the expectations in Eq.(\ref{eq:ll_decomp}). Specifically, sampling $\Tilde{\vz}$ from the posterior SDE, the expectation can be rewritten as 
\begin{equation}
    \mathbb{E}_{\omega^{(i)}\sim \mW_t^{(i)}}\left[p(\vx_{t_i}|\vx_{t_{i-1}}, \Tilde{\vz}_{t_i},\vz_{t_{i-1}}, \omega^{(i)})\mM^{(i)}(\omega^{(i)}) \mathbb{E}_{\omega^{(i+1)}\sim \mW_t^{(i+1)}}[\dots]\dots\right ],
\end{equation}
where $\mM^{(i)}=\exp(-\int_0^{t_i - t_{i-1}} \frac{1}{2}|\vu(\Tilde{\mZ_s}, s)|^2\ \textrm{d}s - \int_0^{t_i - t_{i-1}} \vu(\Tilde{\mZ_s}, s)^T\ \textrm{d}\mW^{(i)}_{s})$ serves as importance weight for the sampled trajectory between the prior latent SDE (Eq.(\ref{eq: prior_sde})) and posterior latent SDE (Eq.(\ref{eq: post_sde})), with $\vu$ satisfying $\bm{\sigma}_\gamma(\vz, s+t_{i-1})\vu(\vz, s) = \bm{\mu}_{\phi_i}(\vz, s+t_{i-1}) - \bm{\mu}_{\gamma}(\vz, s+t_{i-1})$. 
We can define such a posterior latent SDE for each interval. As a result, a sample $\tilde{z}_{t_i}$ can be obtained by solving the posterior SDEs defined on the intervals up to $t_i$ given the initial value $\tilde{z}_{t_0}$ and sample paths of Wiener processes up to $t_i$, i.e., $\omega^{(1)}, \omega^{(2)}, \dots, \omega^{(i)}$. 
After applying the importance weight to each interval and Jensen's inequality, we can derive the following evidence lower bound (ELBO) of the log-likelihood:
\begin{align}
\begin{split}
    \mathcal{L} &= \log \mathbb{E}_{\omega^{(1)}\sim \mW_t^{(1)}}\left[p(\vx_{t_1}|\vx_{t_0}, \Tilde{\vz}_{t_1},\Tilde{\vz}_{t_0}, \omega^{(1)}) \mM^{(1)} \dots \mathbb{E}_{\omega^{(i)}\sim \mW_t^{(i)}}\left[p(\vx_{t_i}|\vx_{t_{i-1}}, \Tilde{\vz}_{t_i}, \Tilde{\vz}_{t_{i-1}}, \omega^{(i)})\mM^{(i)} \dots\right]\right] \\
    & = \log \mathbb{E}_{\omega^{(1)},\dots, \omega^{(n)} \sim \mW_t^{(1)}\times\dots\times \mW_t^{(n)}} \left[\prod_{i=1}^n p(\vx_{t_i}|\vx_{t_{i-1}}, \Tilde{\vz}_{t_i},\Tilde{\vz}_{t_{i-1}}, \omega^{(i)})\mM^{(i)}(\omega^{(i)})\right] \\
    & \geqslant \mathbb{E}_{\omega^{(1)},\dots, \omega^{(n)} \sim \mW_t^{(1)}\times\dots\times \mW_t^{(n)}} \left[\sum_{i=1}^n \log p(\vx_{t_i}|\vx_{t_{i-1}}, \Tilde{\vz}_{t_i},\Tilde{\vz}_{t_{i-1}}, \omega^{(i)}) + \sum_{i=1}^n\log\mM^{(i)}(\omega^{(i)})\right].
    \label{eq:elbo}
\end{split}
\end{align}
The bound above can be further extended into a tighter bound in IWAE~\cite{burda2016importance} form by drawing multiple independent samples from each $\mW_t^{(i)}$. The variational parameter $\phi_i$ is the output of an encoder RNN that takes as inputs the sequence of observations up to $t_i$, $\{\vx_{t_1}, \dots, \vx_{t_i}\}$, and the sequence of previously sampled latent states, $\{\vz_{t_1}, \dots, \vz_{t_{i-1}}\}$. The parameter $\phi_i$ is thus updated based on both the latest observation and past history. As a result, the variational posterior distributions of the latent states $\mZ_{t_i}$ are no longer constrained to be Markov and the parameterization of the variational posterior can flexibly adapt to different time grids. The dependency structure between latent and observed variables during inference is depicted in Fig.~\ref{fig:pgm_inference}.

\section{Related Work}
\label{sec:related}

The introduction of neural ODEs~\cite{chen2018neural} unified modeling approaches based on differential equations and modern machine learning.  Explorations of time-series modeling leveraging this class of approach have been conducted, which we review here.

The latent ODE model~\cite{chen2018neural,rubanova2019latent} propagates a latent state across time using ordinary differential equations. As a result, the entire latent trajectory is solely determined by its initial value. Even though latent ODE models have continuous latent trajectories, the latent state is decoded into observations at each time step independently. Neural controlled differential equations (CDEs)~\cite{kidger2020neuralcde} and rough differential equations (RDEs)~\cite{morrill2020neural} propagate a hidden state across time continuously using controlled differential equations driven by functions of time interpolated from observations on irregular time grids.  Neural ODE Processes (NDPs)~\cite{norcliffe2021node} construct a distribution over neural ODEs in order to effectively manage uncertainty over the underlying process that generates the data.

Among existing time-series works with continuous dynamics, the latent SDE model~\cite{li2020scalable} is most similar to ours.  The SDE model includes an adjoint sensitivity method for training SDEs; the derivation of the variational lower bound in our proposed model is based on the same principle of trajectory importance weighting between two stochastic differential equations.  The posterior process there is defined as a global stochastic differential equation.  Our model further exploits the given observation time grid of each sequence to induce a piecewise posterior process with richer structure.

The continuous-time flow process~\cite{deng2020modeling} (CTFP) models irregular time-series data as an incomplete realization of continuous-time stochastic processes obtained by applying normalizing flows to the Wiener process. We have discussed the limits of CTFP in Sec.~\ref{sec:cont_flow}. Because CTFP is a generative model that is guaranteed to generate continuous trajectories, we use it as the decoder of a latent process for better inductive bias in modeling continuous dynamics. Apart from CTFP, there are also works outside the deep learning literature that apply invertible transformations to stochastic processes~\cite{snelson2004warped,wilson2010copula}. Warped Gaussian Processes~\cite{snelson2004warped} transform Gaussian processes to non-Gaussian processes in observation space using monotonic functions. Copula Processes~\cite{wilson2010copula} extend the concept of copulas from multivariate random variables to stochastic processes. They transform a stochastic process via a series of marginal cumulative distribution functions to obtain another process while preserving the underlying dependency structure.

Alternative training frameworks have also been explored for neural SDEs~\cite{hasan2020identifying,kidger2021neural}. Stochastic differential equations can be learned as latent dynamics with a variational approximation~\cite{hasan2020identifying}. %
Connections between generative adversarial network (GAN) objectives and neural SDEs have been drawn~\cite{kidger2021neural}. Brownian motion inputs can be mapped to time-series outputs.  Alongside a CDE-based discriminator GAN-based training can be conducted to obtain continuous-time generative models.

\section{Experiments}
\label{sec:experiments}
We compare our proposed architecture against several baseline models with continuous dynamics that can be used to fit irregular time-series data, including CTFP, latent CTFP, latent SDE, and latent ODE. We also run experiments with Variational RNN (VRNN)~\cite{chung2015recurrent}, a model that can be used to fit sequential data. However, VRNN does not define a continuous dynamical model and cannot generate trajectories with finite-dimensional distributions that are consistent with its log-likelihood estimation. All implementation and training details can be found in the supplementary material.

\subsection{Synthetic Data}

We evaluate our model on synthetic data sampled from known stochastic processes to verify its ability to capture a variety of continuous dynamics. We use the following processes:

\textbf{Geometric Brownian Motion} $\textrm{d}\mX_t = \mu\mX_t\ \textrm{d}t + \sigma\mX_t\ \textrm{d}\mW_t$. Even though geometric Brownian motion can theoretically be captured by the CTFP model, it would require the normalizing flow to be non-Lipschitz; there is no such constraint for the proposed CLPF model.

\textbf{Linear SDE} $\textrm{d}\mX_t = (a(t)\mX_t + b(t))\ \textrm{d}t + \sigma(t)\ \textrm{d}\mW_t$. The drift term of the SDE is a linear transformation of $\mX_t$ and the variance term is a deterministic function of time $t$. An application of It\^{o}'s lemma shows that the solution is a stochastic process that cannot be captured by CTFP.
 
\textbf{Continuous AR(4) Process.} This process tests our model's ability to capture non-Markov processes (see Appendix~\ref{append:C} for implementation details):
   \begin{align}
        \begin{split}
            \mX_t &= [d, 0, 0, 0]\mY_t, \\
             \textrm{d}\mY_t &= A\mY_t\ \textrm{d}t + e \ \textrm{d}\mW_t,
        \end{split}, \quad \mbox{ where } A = \left[\begin{matrix}0 & 1 &0&0\\
        0&0&1&0\\
        0&0&0&1\\
        a_1&a_2&a_3&a_4
        \end{matrix}\right].
    \end{align}
   
\textbf{Stochastic Lorenz Curve.} We evaluate our model's performance on multi-dimensional (3D) data:
    \begin{align}
        \begin{split}
            \textrm{d}\mX_t &= \sigma(\mY_t - \mX_t)\ \textrm{d}t + \alpha_x\ \textrm{d}\mW_t, \\
            \textrm{d}\mY_t &= (\mX_t(\rho - \mZ_t) - \mY_t)\ \textrm{d}t + \alpha_y\ \textrm{d}\mW_t, \\
            \textrm{d}\mZ_t &= (\mX_t\mY_t - \beta\mZ_t)\ \textrm{d}t + \alpha_z\ \textrm{d}\mW_t. \\
        \end{split}
    \end{align}

In all cases, we sample the observation time stamps from a homogeneous Poisson process with intensity $\lambda$. To demonstrate our model's ability to generalize to different time grids, we evaluate it using different intensities $\lambda$. An approximate numerical solution to the SDEs is obtained using the Euler-Maruyama scheme for the It\^{o} integral.

\begin{table}
    \caption{\small {\bf Quantitative Evaluation (Synthetic Data)}. We show test negative log-likelihoods (NLLs) of four synthetic stochastic processes across different models. Below each process, we indicate the intensity of the Poisson process from which the time stamps for the test sequences were sampled for testing.
    [GBM: geometric Brownian motion (ground truth NLLs: $[\lambda=2,\lambda=20]= [0.388, -0.788]$); LSDE: linear SDE; CAR: continuous auto-regressive process; SLC: stochastic Lorenz curve]}   
    \label{tab:synthetic}
    \centering
    \small
    \scalebox{0.75}{
    \begin{tabular}{l@{\extracolsep{3pt}}rrrrrrrr@{}}
        \toprule
         \multirow{2}{*}{Model}
         &\multicolumn{2}{c}{GBM}&\multicolumn{2}{c}{LSDE}&\multicolumn{2}{c}{CAR}&\multicolumn{2}{c}{SLC}\\
         \cmidrule{2-3}
         \cmidrule{4-5}
         \cmidrule{6-7}
         \cmidrule{8-9}
         &$\lambda=2$ & $\lambda=20$ & $\lambda=2$ & $\lambda=20$ & $\lambda=2$ &  $\lambda=20$ & $\lambda=20$ & $\lambda=40$ \\
        \midrule
        VRNN~\cite{chung2015recurrent} & 0.425 & -0.650 & -0.634 & -1.665 & 1.832 & 2.675 & 2.237 & 1.753 \\
        \midrule
         Latent ODE~\cite{rubanova2019latent} & 1.916 & 1.796 & 0.900 & 0.847 & 4.872 & 4.765 & 9.117 & 9.115\\
         CTFP~\cite{deng2020modeling}  & 2.940 & 0.678 & -0.471 & -1.778 & 383.593 & 51.950 & 0.489 & -0.586 \\
         Latent CTFP~\cite{deng2020modeling} & 1.472 & -0.158 & -0.468 & -1.784 & 249.839 & 43.007 & 1.419 & -0.077 \\
         Latent SDE~\cite{li2020scalable} & 1.243 & 1.778 & 0.082 & 0.217 & 3.594 & 3.603 & 7.740 & 8.256 \\
         CLPF ({\bf ours})& 0.444 & -0.698 & -0.831 & -1.939 & 1.322 & -0.077 & -2.620 & -3.963 \\
        \bottomrule
    \end{tabular}
    }
    \vspace{-10pt}
\end{table}

Our results on synthetic data are displayed in Table~\ref{tab:synthetic}.
We train and evaluate all models on observations of sample trajectories in the interval $[0,30]$ with observation time stamps sampled from a Poisson process with intensity $\lambda=2$, except for the stochastic Lorenz curve which is sampled in the interval $[0, 2]$. The results demonstrate the proposed CLPF model's favourable performance across the board. In particular, our model outperforms both CTFP variants on all four synthetic datasets. We attribute this competitive edge to the expressive power of a generic SDE over a static latent variable. The CTFP models also perform relatively poorly on the Continuous AR(4) process, which is non-Markov. As the Continuous AR(4) process has an underlying 4-dim.~stochastic process $\mY_t$ and is generated by projecting $\mY_t$ to a 1-dim.~observation space $\mX_t$, models driven by a high-dimensional latent process like latent SDE and CLPF show better performance in this case.

To evaluate our model's performance in capturing continuous dynamics, we increase the (average) density of observations and generate observation time stamps from a Poisson process with intensity $\lambda=20$ ($\lambda=40$ for SLC). The results show that models that can generate continuous trajectories, including CTFP, latent CTFP and CLPF, generalize better to dense observations than the other models. In the first row of Fig.~\ref{fig:sample}, we visualize sample sequences from CLPF and VRNN models trained on LSDE data and compare them with samples from an LSDE ground truth process. We run both CLPF and VRNN on a time grid between 0 and 30, with a gap of 0.01 between steps. Samples from the CLPF models share important visual properties with samples from the ground truth process. On the other hand, VRNN fails to generate sequences visually similar to the ground truth, despite its competitive density estimation results on sparse time grids.

\subsection{Real-world Data}
\begin{table}[t]
\begin{minipage}[t]{.48\linewidth}
    \centering
    \caption{\small {\bf Likelihood Estimation (Real-World Data).} We show test negative log-likelihoods (NLLs; lower is better). For CTFP, the reported values are exact; for the other models, we report IWAE bounds using $K=125$ samples.
    CLPF-ANODE stands for a CLPF model implemented with augmented neural ODEs. CLPF-iRes stands for a CLPF model implemented with indexed residual flows.}
    \label{tab:real-data} 

    \small
    \scalebox{0.75}{\begin{tabular}{lrrr}
        \toprule
         Model & Mujoco~\cite{rubanova2019latent} & BAQD~\cite{zhang2017cautionary} & PTBDB~\cite{bousseljot1995nutzung} \\
        \midrule
        VRNN~\cite{chung2015recurrent} & -15.876 & -1.204 & -2.035\\
        \midrule
         Latent ODE~\cite{rubanova2019latent} & 23.551 & 2.540 & -0.533\\
         Latent SDE~\cite{li2020scalable} & 3.071 & 1.512 & -1.358\\
         CTFP~\cite{deng2020modeling} & -7.598 & -0.170 & -1.281 \\
         Latent CTFP~\cite{deng2020modeling} & -12.693& -0.480 & -1.659\\
         \midrule
         CLPF-ANODE~({\bf ours}) & -14.694 & -0.619 & -1.575\\
         CLPF-iRes~({\bf ours}) & -10.873 & -0.486 & -1.519\\
        \bottomrule
    \end{tabular}}

\end{minipage}\hfill
\begin{minipage}[t]{.48\linewidth}
    \centering
    \caption{\small {\bf Ablation Study (Synthetic Data).} We show test negative log-likelihoods across different variants of the proposed model. We report IWAE bounds using $K=125$ samples with observation time stamps sampled from a Poisson point process with $\lambda=2$. [CLPF-Global: single global posterior SDE in latent SDE style~\cite{li2020scalable}; CLPF-Independent: independent decoder instead of CTFP-decoder; CLPF-Wiener: Wiener base process instead of OU-process]}
    \label{tab:ablation} 

    \small
    \scalebox{0.75}{
    \begin{tabular}{lrrrr}
        \toprule
         Model & GBM & LSDE & CAR & SLC \\
        \midrule
        CLPF-Global & 0.447 & -0.821 & 1.552 & -3.304 \\
        CLPF-Independent & 0.800 & -0.326 & 4.970 & 7.924 \\
        CLPF-Wiener & 0.390 & -0.790 & 1.041 & -1.885 \\
        \midrule
        Latent SDE & 1.243 & 0.082 & 3.594 & 7.740 \\
        CLPF & 0.444 & -0.831 & 1.322 & -2.620\\
        \bottomrule
    \end{tabular}}

\end{minipage}
\vspace{-10pt}
\end{table}

\begin{table}[t]
    \centering
    \caption{\small {\bf Sequential Prediction (Real-World Data).} We report the average L2 distance between prediction results and ground truth observations in a sequential prediction setting. The prediction is based on the average of 125 samples. Results are reported in the format \emph{mean, }[\emph{25th percentile, 75th percentile}].
    }
    \label{tab:real-data-pred} 

    \small
    \scalebox{0.75}{\begin{tabular}{lrrr}
        \toprule
         Model & Mujoco~\cite{rubanova2019latent} & BAQD~\cite{zhang2017cautionary} & PTBDB~\cite{bousseljot1995nutzung} \\
        \midrule
        VRNN~\cite{chung2015recurrent} & 1.599, [0.196, ~~1.221] & 0.519, [0.168, 0.681] & 0.037, [0.005, 0.032]\\
        \midrule
         Latent ODE~\cite{rubanova2019latent} & 13.959, [9.857, 15.673] & 1.416, [0.936, 1.731] & 0.224, [0.114, 0.322]\\
         Latent SDE~\cite{li2020scalable} & 7.627, [2.384, ~~8.381] & 0.848, [0.454, 1.042] & 0.092, [0.032, 0.111]\\
         CTFP~\cite{deng2020modeling} & 1.969, [0.173, ~~1.826] & 0.694, [0.202, 0.966] & 0.055, [0.006, 0.046] \\
         Latent CTFP~\cite{deng2020modeling} & 1.983, [0.167, ~~1.744] & 0.680, [0.189, 0.943] & 0.065, [0.007, 0.059]\\
         \midrule
         CLPF-ANODE~({\bf ours}) & 1.629, [0.149, ~~1.575] & 0.542, [0.150, 0.726] & 0.048, [0.005, 0.041]\\
         CLPF-iRes~({\bf ours}) & 1.846, [0.177, ~~1.685] & 0.582, [0.183, 0.805] & 0.055, [0.006, 0.049]\\
        \bottomrule
    \end{tabular}}

\end{table}
Many real-world systems, despite having continuous dynamics, are recorded via observations at a fixed sampling rate. 
Mujoco-Hopper~\cite{rubanova2019latent}, Beijing Air-Quality Dataset (BAQD)~\cite{zhang2017cautionary} and PTB Diagnostic Database (PTBDB)~\cite{bousseljot1995nutzung, goldberger2000physiobank} are examples of such datasets.  We create training/testing data at irregular times by drawing time stamps from a Poisson process and mapping them to the nearest observed sample points (see Appendix~\ref{append:D} for details).

We demonstrate our model's ability to fit this type of observation as a surrogate for its ability to capture real-world continuous dynamics. Additionally, we also evaluate CLPF on a sequential prediction task to illustrate its benefits in an online inference setting. The results for these experiments are reported in Table~\ref{tab:real-data} and Table~\ref{tab:real-data-pred}. In the likelihood estimation task our model achieves competitive performance with respect to other methods that generate continuous dynamics. We do observe low negative log-likelihoods (NLLs) by VRNN on these data, though note that VRNN does not generate true continuous dynamics. In the sequential prediction task CLPF outperforms the other baselines with continuous dynamics; the performance gap between CLPF and VRNN is significantly reduced compared to likelihood estimation.

In the second row of Fig.~\ref{fig:sample}, we also visualize one dimension of sample sequences generated from CLPF and VRNN models trained on a dense time grid of BAQD data (gap between steps: 0.01) and compare with ground truth samples from the dataset. We can see that VRNN fails to generate trajectories visually similar to samples from the ground truth data on a dense time grid, despite its favourable NLL on a sparse and irregular time grid. The sampled trajectories tend to converge and show little variance after running VRNN for a large number of iterations (1500 for $t=1.5$). We also observe that VRNN fails to generate trajectories with consistent visual properties when we sample observations on different time grids (see supplementary material for more qualitative samples).

\begin{figure*}
\vspace{-0.5cm}
    \centering
    \begin{subfigure}[b]{0.3\textwidth}
        \centering
        \includegraphics[width=\textwidth]{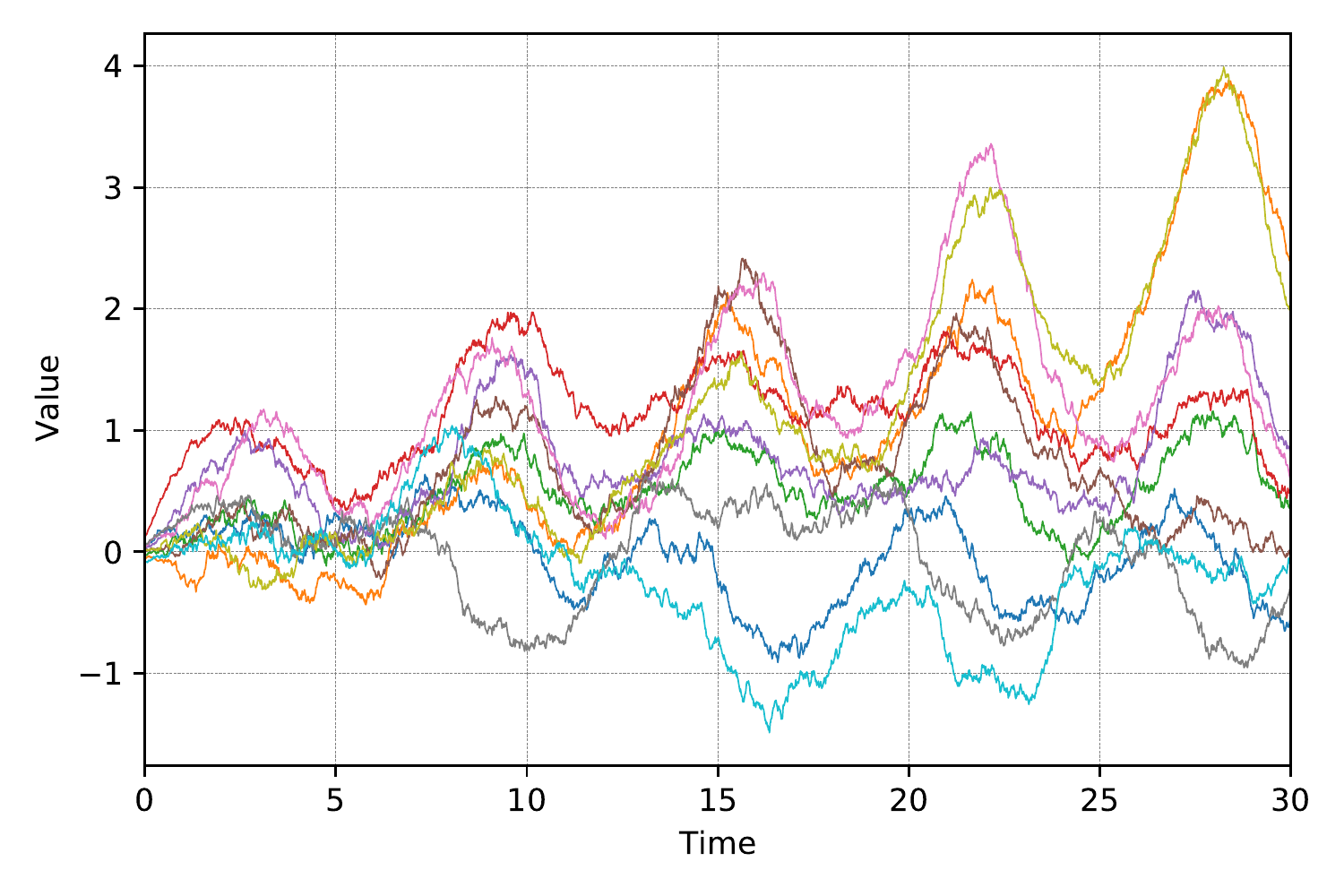}
        \\
        \includegraphics[width=\textwidth]{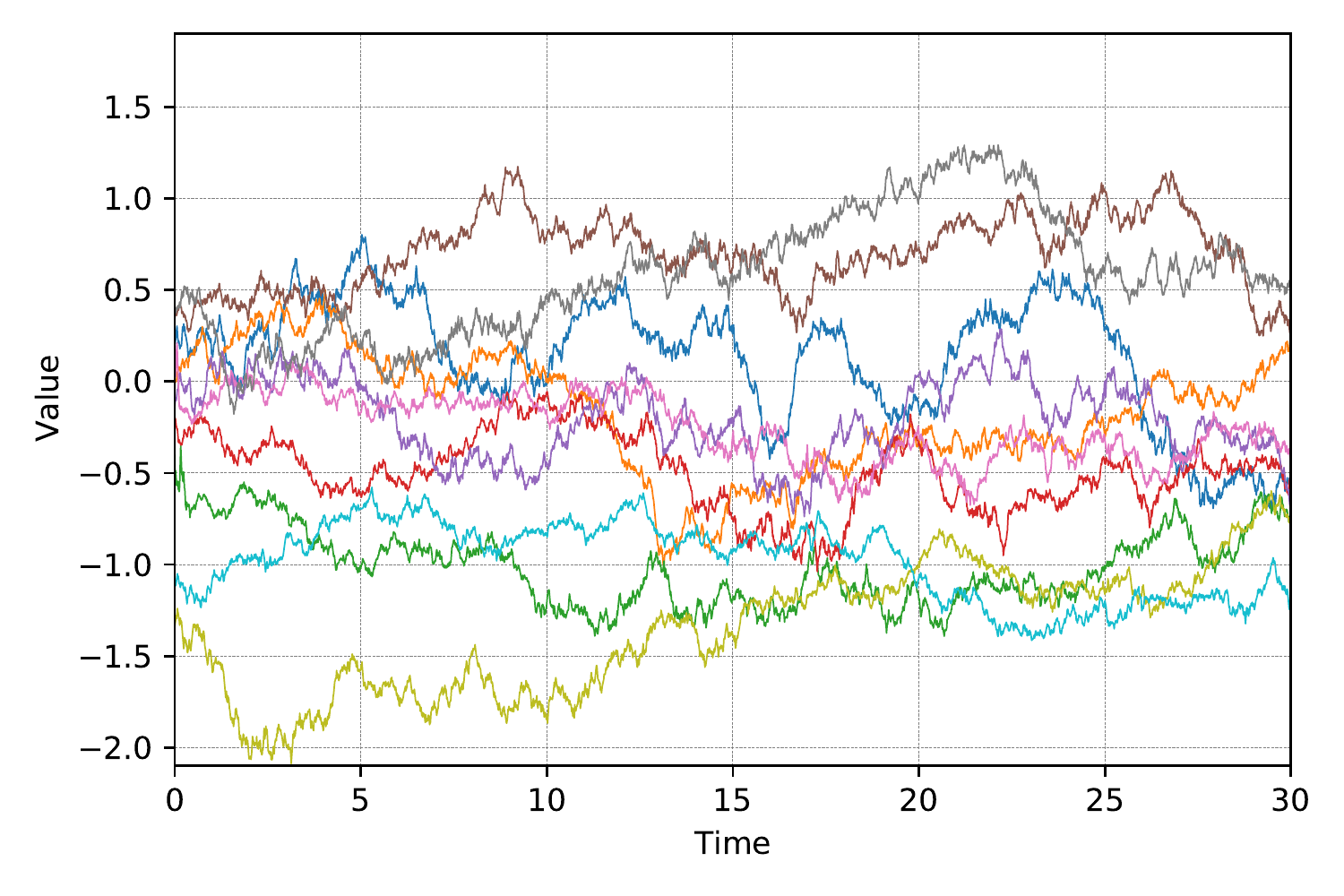}
        \caption{CLPF}
        \label{fig:sample_clpf}
    \end{subfigure}
    \quad
    \begin{subfigure}[b]{0.3\textwidth}
        \centering
        \includegraphics[width=\textwidth]{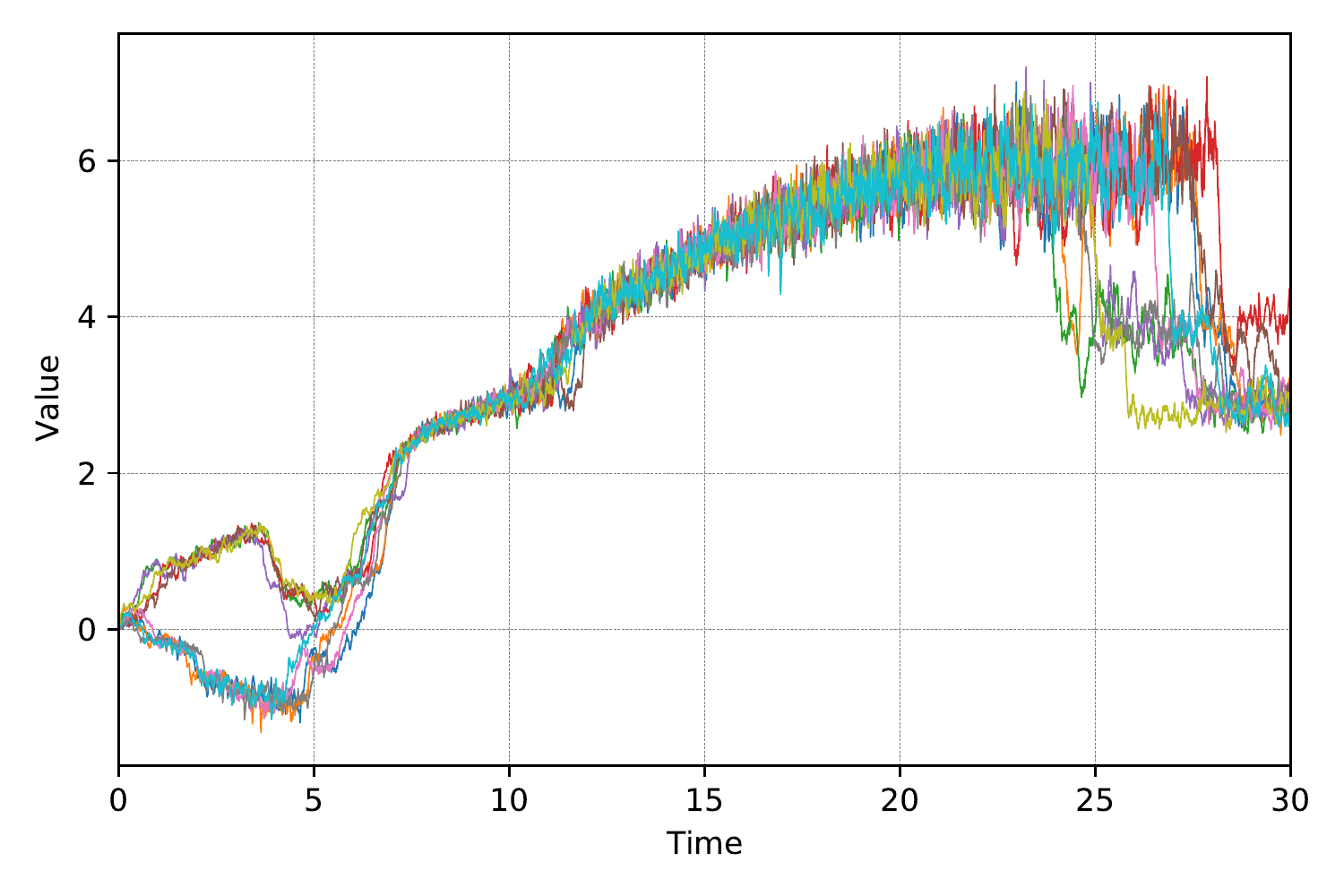}
        \\
        \includegraphics[width=\textwidth]{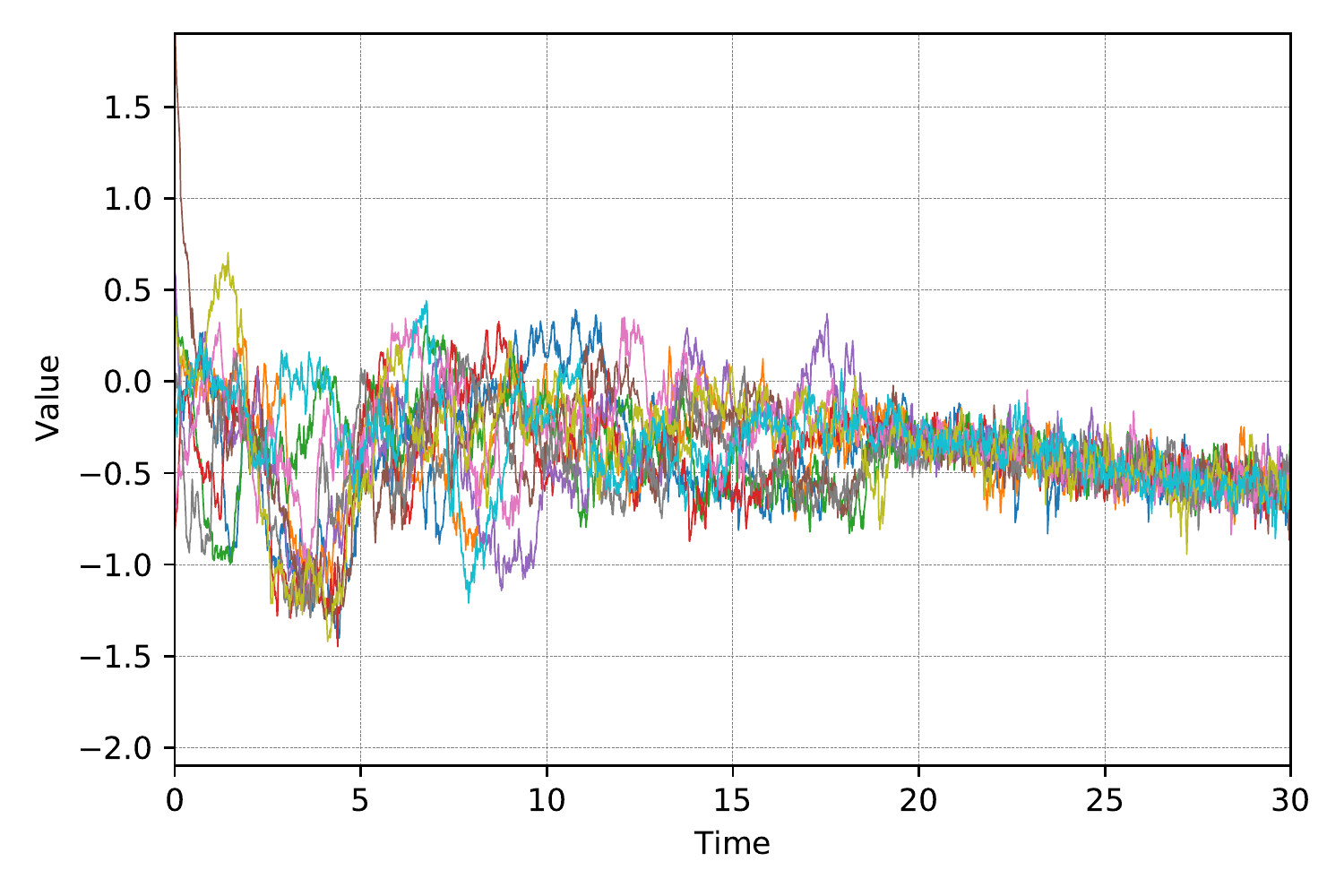}
        \caption{VRNN}
        \label{fig:sample_vrnn}
    \end{subfigure}
    \quad
    \begin{subfigure}[b]{0.30\textwidth}
        \centering
        \includegraphics[width=\textwidth]{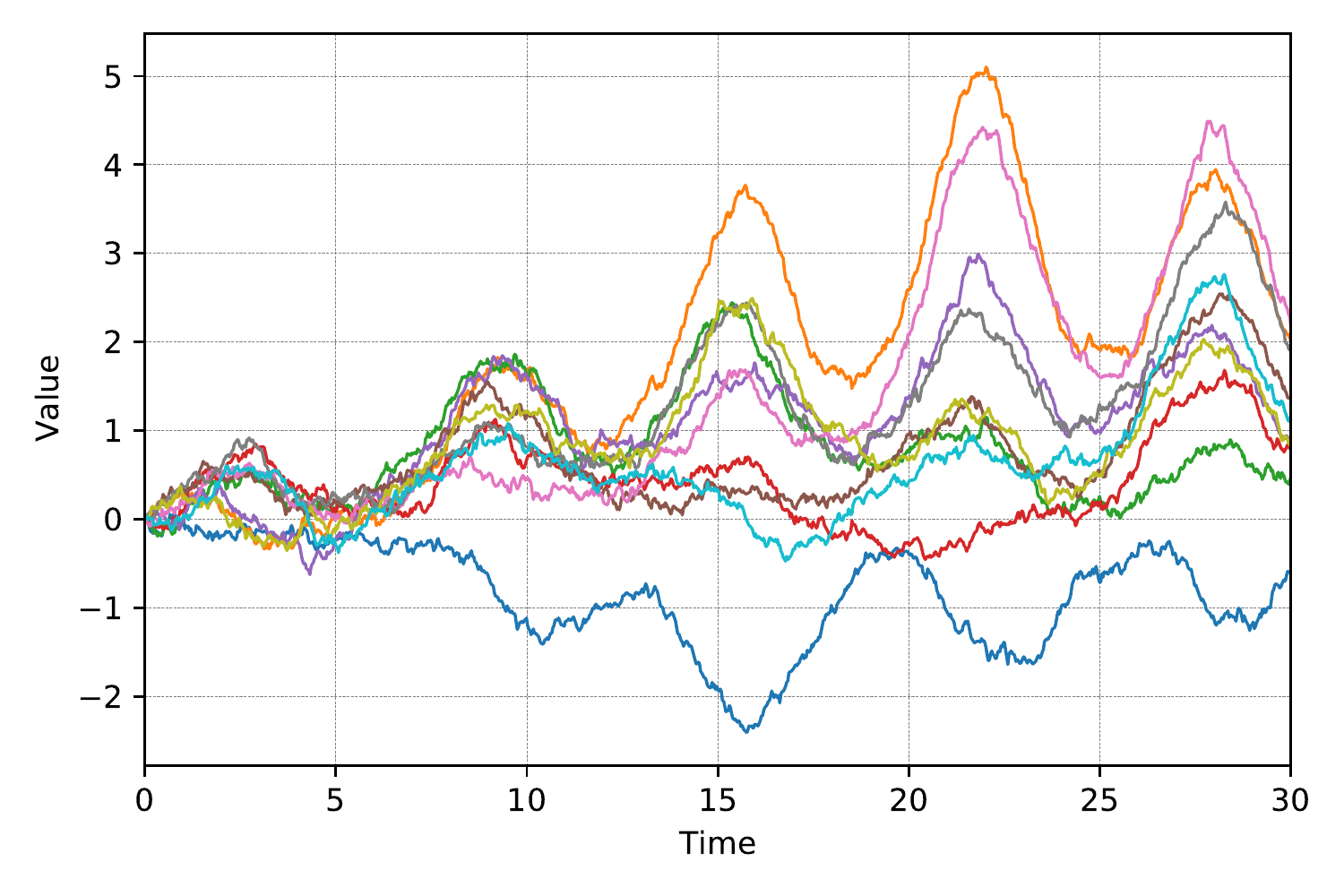}
        \\
        \includegraphics[width=\textwidth]{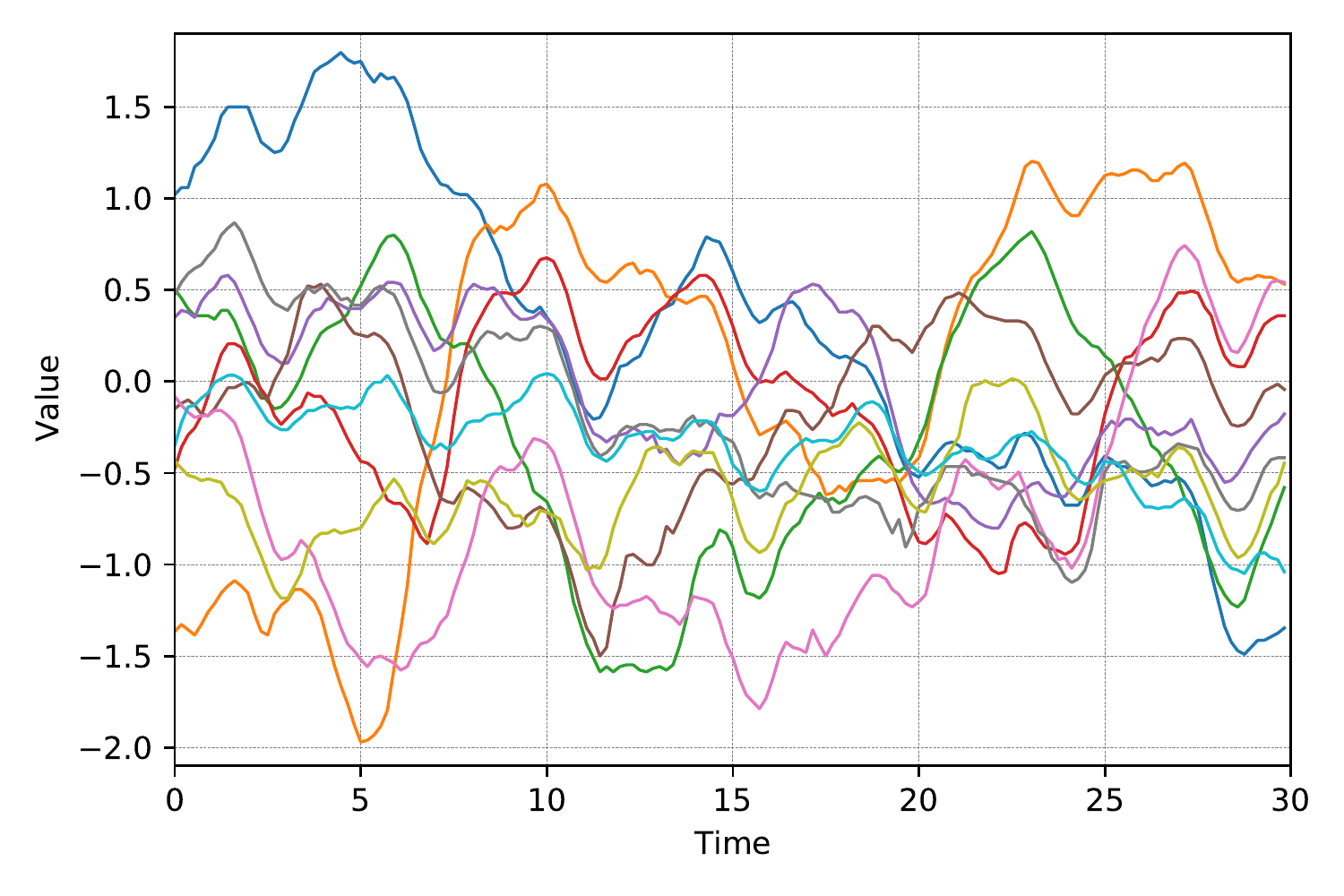}
        \caption{Ground Truth}
        \label{fig:sample_gt}
    \end{subfigure}
    \caption{\small {\bf Qualitative Samples on Dense Time Grid of CLPF and VRNN Models.} 
    We generate the sample sequences from CLPF and VRNN trained with LSDE (first row) and BAQD (second row) data and compare the samples generated from the model to samples from the ground truth process (LSDE) or training set (BAQD).}
    \label{fig:sample}
    \vspace{-10pt}
\end{figure*}

\subsection{Ablation Study}

We compare the proposed CLPF models against three variants that are obtained by making changes to key components of CLPF. The results with these models are shown in Table~\ref{tab:ablation}.
\textbf{CLPF-Global} is a variant created by replacing our piecewise construction with a global posterior process similar to latent SDE~\cite{li2020scalable}.
\textbf{CLPF-Independent} is a variant where the continuously indexed normalizing flows and CTFP-decoding are replaced with static distributions and independent decoders.
\textbf{CLPF-Wiener} is a variant that uses a Wiener process as the base process for the CTFP decoder.

CLPF-Global differs from CLPF in a manner similar to the difference between latent SDE and CLPF-Independent. In both comparisons we can see modest improvements of performance when we replace the posterior process defined by a single SDE with a piecewise constructed posterior. Both the comparisons between CLPF and CLPF-Independent and between CLPF-Global and latent SDE show that a CTFP-style decoding into continuous trajectories is indeed a better inductive bias when modeling data with continuous dynamics. Although the comparison with CLPF-Wiener does not show a clear trend, CLPF still delivers an overall better performance on the synthetic datasets. Therefore we use the OU process as our base process in all CLPF experiments.

\section{Discussion and Conclusion}
\label{sec:conclusion}

\paragraph{Limits.} While being more expressive than the CTFP model, the proposed CLPF model is not capable of exact likelihood evaluation and relies on variational approximations for estimation and optimization. The CLPF model can also comprise up to two differential equations that need to be numerically solved, potentially incurring higher computational cost. Fortunately, this computational cost can be controlled via additional parameters: (1) The approximation error of numerical ODE/SDE solutions can be controlled through the tolerance parameters of numerical solvers and the inversion error of residual flows can be controlled through the number of fixed-point iterations.~\cite{behrmann2019invertible}; (2) The tightness of the \mbox{IWAE bound for likelihood estimation can be controlled through the number of latent samples.}
\paragraph{Broader Impact.} As an expressive generative model for continuous time-series, we believe CLPF can be used to model continuous dynamics from partial observations in a wide range of areas, including physics, healthcare, and finance. Furthermore, it can potentially be leveraged in a variety of downstream tasks, including data generation, forecasting, and interpolation. However, care must be taken that CLPF models are not used for adverse inference on restricted data. Protection of critical unobserved information is an active area of research~\cite{abadi2016deep} that can be explored as an orthogonal component in future versions of our model.
\paragraph{Conclusion.} We have presented Continuous Latent Process Flows (CLPF), a generative model of continuous dynamics that enables inference on arbitrary real time grids, a complex operation for which we have also introduced a powerful piecewise variational approximation. Our architecture is built around the representation power of a flexible stochastic differential equation driving a continuously indexed normalizing flow. An ablation study as well as a comparison to state-of-the-art baselines for continuous dynamics have demonstrated the effectiveness of our contributions on both synthetic and real-world datasets. A set of qualitative results support our findings. In the future, we plan to explore the theoretical properties of our model in more detail, including an analysis of its universal approximation properties and its ability to capture non-stationary dynamics.

{\small
  \bibliographystyle{plainnat}
  \bibliography{lit}

\begin{thebibliography}{38}
\providecommand{\natexlab}[1]{#1}
\providecommand{\url}[1]{\texttt{#1}}
\expandafter\ifx\csname urlstyle\endcsname\relax
  \providecommand{\doi}[1]{doi: #1}\else
  \providecommand{\doi}{doi: \begingroup \urlstyle{rm}\Url}\fi

\bibitem[Abadi et~al.(2016)Abadi, Chu, Goodfellow, McMahan, Mironov, Talwar,
  and Zhang]{abadi2016deep}
Martin Abadi, Andy Chu, Ian Goodfellow, H~Brendan McMahan, Ilya Mironov, Kunal
  Talwar, and Li~Zhang.
\newblock {Deep Learning with Differential Privacy}.
\newblock \emph{Proceedings of the 2016 ACM SIGSAC Conference on Computer and
  Communications Security}, 2016.

\bibitem[Archambeau et~al.(2007)Archambeau, Cornford, Opper, and
  Shawe-Taylor]{archambeau2007gaussian}
Cedric Archambeau, Dan Cornford, Manfred Opper, and John Shawe-Taylor.
\newblock {Gaussian Process Approximations of Stochastic Differential
  Equations}.
\newblock \emph{Proceedings of Machine Learning Research (PMLR)}, 2007.

\bibitem[Bayram et~al.(2018)Bayram, Partal, and Buyukoz]{bayram2018numerical}
Mustafa Bayram, Tugcem Partal, and Gulsen~Orucova Buyukoz.
\newblock {Numerical Methods for Simulation of Stochastic Differential
  Equations}.
\newblock \emph{Advances in Difference Equations}, 2018.

\bibitem[Behrmann et~al.(2019)Behrmann, Grathwohl, Chen, Duvenaud, and
  Jacobsen]{behrmann2019invertible}
Jens Behrmann, Will Grathwohl, Ricky T.~Q. Chen, David Duvenaud, and
  Joern-Henrik Jacobsen.
\newblock {Invertible Residual Networks}.
\newblock \emph{International Conference on Machine Learning (ICML)}, 2019.

\bibitem[Bousseljot et~al.(1995)Bousseljot, Kreiseler, and
  Schnabel]{bousseljot1995nutzung}
R~Bousseljot, D~Kreiseler, and A~Schnabel.
\newblock {Nutzung der EKG-Signaldatenbank CARDIODAT der PTB {\"u}ber das
  Internet}.
\newblock \emph{Biomedizinische Technik/Biomedical Engineering}, 1995.

\bibitem[Burda et~al.(2016)Burda, Grosse, and
  Salakhutdinov]{burda2016importance}
Yuri Burda, Roger Grosse, and Ruslan Salakhutdinov.
\newblock {Importance Weighted Autoencoders}.
\newblock \emph{International Conference on Learning Representations (ICLR)},
  2016.

\bibitem[Caterini et~al.(2020)Caterini, Cornish, Sejdinovic, and
  Doucet]{caterini2020variational}
Anthony Caterini, Rob Cornish, Dino Sejdinovic, and Arnaud Doucet.
\newblock {Variational Inference with Continuously-Indexed Normalizing Flows}.
\newblock \emph{arXiv:2007.05426}, 2020.

\bibitem[Chen et~al.(2018)Chen, Rubanova, Bettencourt, and
  Duvenaud]{chen2018neural}
Ricky T.~Q. Chen, Yulia Rubanova, Jesse Bettencourt, and David Duvenaud.
\newblock {Neural Ordinary Differential Equations}.
\newblock \emph{Advances in Neural Information Processing Systems (NeurIPS)},
  2018.

\bibitem[Chen et~al.(2019)Chen, Behrmann, Duvenaud, and
  Jacobsen]{chen2019residual}
Ricky T.~Q. Chen, Jens Behrmann, David Duvenaud, and J{\"o}rn-Henrik Jacobsen.
\newblock {Residual Flows for Invertible Generative Modeling}.
\newblock \emph{Advances in Neural Information Processing Systems (NeurIPS)},
  2019.

\bibitem[Chung et~al.(2015)Chung, Kastner, Dinh, Goel, Courville, and
  Bengio]{chung2015recurrent}
Junyoung Chung, Kyle Kastner, Laurent Dinh, Kratarth Goel, Aaron~C Courville,
  and Yoshua Bengio.
\newblock {A Recurrent Latent Variable Model for Sequential Data}.
\newblock \emph{Advances in Neural Information Processing Systems (NeurIPS)},
  2015.

\bibitem[Cornish et~al.(2020)Cornish, Caterini, Deligiannidis, and
  Doucet]{cornish2020relaxing}
Rob Cornish, Anthony Caterini, George Deligiannidis, and Arnaud Doucet.
\newblock {Relaxing Bijectivity Constraints with Continuously Indexed
  Normalising Flows}.
\newblock \emph{International Conference on Machine Learning (ICML)}, 2020.

\bibitem[Deng et~al.(2020)Deng, Chang, Brubaker, Mori, and
  Lehrmann]{deng2020modeling}
Ruizhi Deng, Bo~Chang, Marcus~A Brubaker, Greg Mori, and Andreas Lehrmann.
\newblock {Modeling Continuous Stochastic Processes with Dynamic Normalizing
  Flows}.
\newblock \emph{Advances in Neural Information Processing Systems (NeurIPS)},
  2020.

\bibitem[Dinh et~al.(2015)Dinh, Krueger, and Bengio]{dinh2014nice}
Laurent Dinh, David Krueger, and Yoshua Bengio.
\newblock {NICE: Non-linear Independent Components Estimation}.
\newblock \emph{International Conference on Learning Representations (ICLR)},
  2015.

\bibitem[Dinh et~al.(2017)Dinh, Sohl-Dickstein, and Bengio]{dinh2017density}
Laurent Dinh, Jascha Sohl-Dickstein, and Samy Bengio.
\newblock {Density Estimation using Real NVP}.
\newblock \emph{International Conference on Learning Representations (ICLR)},
  2017.

\bibitem[Dupont et~al.(2019)Dupont, Doucet, and Teh]{dupont19}
Emilien Dupont, Arnaud Doucet, and Yee~Whye Teh.
\newblock {Augmented Neural ODEs}.
\newblock \emph{Advances in Neural Information Processing Systems (NeurIPS)},
  2019.

\bibitem[Gen{\c{c}}ay et~al.(2001)Gen{\c{c}}ay, Dacorogna, Muller, Pictet, and
  Olsen]{genccay2001introduction}
Ramazan Gen{\c{c}}ay, Michel Dacorogna, Ulrich~A Muller, Olivier Pictet, and
  Richard Olsen.
\newblock \emph{{An Introduction to High-frequency Finance}}.
\newblock Elsevier, 2001.

\bibitem[Goldberger et~al.(2000)Goldberger, Amaral, Glass, Hausdorff, Ivanov,
  Mark, Mietus, Moody, Peng, and Stanley]{goldberger2000physiobank}
Ary~L Goldberger, Luis~AN Amaral, Leon Glass, Jeffrey~M Hausdorff, Plamen~Ch
  Ivanov, Roger~G Mark, Joseph~E Mietus, George~B Moody, Chung-Kang Peng, and
  H~Eugene Stanley.
\newblock {PhysioBank, PhysioToolkit, and PhysioNet: Components of a New
  Research Resource for Complex Physiologic Signals}.
\newblock \emph{Circulation}, 2000.

\bibitem[Grathwohl et~al.(2019)Grathwohl, Chen, Bettencourt, and
  Duvenaud]{grathwohl2018scalable}
Will Grathwohl, Ricky T.~Q. Chen, Jesse Bettencourt, and David Duvenaud.
\newblock {Scalable Reversible Generative Models with Free-form Continuous
  Dynamics}.
\newblock \emph{International Conference on Learning Representations (ICLR)},
  2019.

\bibitem[Hasan et~al.(2020)Hasan, Pereira, Farsiu, and
  Tarokh]{hasan2020identifying}
Ali Hasan, Jo{\~a}o~M Pereira, Sina Farsiu, and Vahid Tarokh.
\newblock {Identifying Latent Stochastic Differential Equations with
  Variational Auto-Encoders}.
\newblock \emph{arXiv:2007.06075}, 2020.

\bibitem[Kidger et~al.(2021{\natexlab{a}})Kidger, Foster, Li, Oberhauser, and
  Lyons]{kidger2021neural}
Patrick Kidger, James Foster, Xuechen Li, Harald Oberhauser, and Terry Lyons.
\newblock {Neural SDEs as Infinite-dimensional GANs}.
\newblock \emph{International Conference on Machine Learning (ICML)},
  2021{\natexlab{a}}.

\bibitem[Kidger et~al.(2021{\natexlab{b}})Kidger, Morrill, Foster, and
  Lyons]{kidger2020neuralcde}
Patrick Kidger, James Morrill, James Foster, and Terry Lyons.
\newblock {Neural Controlled Differential Equations for Irregular Time Series}.
\newblock \emph{Advances in Neural Information Processing Systems (NeurIPS)},
  2021{\natexlab{b}}.

\bibitem[Kingma and Dhariwal(2018)]{kingma2018glow}
Durk~P Kingma and Prafulla Dhariwal.
\newblock {Glow: Generative Flow with Invertible 1x1 convolutions}.
\newblock \emph{Advances in Neural Information Processing Systems (NeurIPS)},
  2018.

\bibitem[Kingma et~al.(2016)Kingma, Salimans, Jozefowicz, Chen, Sutskever, and
  Welling]{kingma2016improved}
Durk~P Kingma, Tim Salimans, Rafal Jozefowicz, Xi~Chen, Ilya Sutskever, and Max
  Welling.
\newblock {Improved Variational Inference with Inverse Autoregressive Flow}.
\newblock \emph{Advances in Neural Information Processing Systems (NeurIPS)},
  2016.

\bibitem[Kobyzev et~al.(2019)Kobyzev, Prince, and
  Brubaker]{kobyzev2019normalizing}
Ivan Kobyzev, Simon Prince, and Marcus~A Brubaker.
\newblock {Normalizing Flows: Introduction and Ideas}.
\newblock \emph{arXiv:1908.09257}, 2019.

\bibitem[Li et~al.(2020)Li, Wong, Chen, and Duvenaud]{li2020scalable}
Xuechen Li, Ting-Kam~Leonard Wong, Ricky T.~Q. Chen, and David Duvenaud.
\newblock {Scalable Gradients for Stochastic Differential Equations}.
\newblock \emph{International Conference on Artificial Intelligence and
  Statistics (AISTATS)}, 2020.

\bibitem[Morrill et~al.(2021)Morrill, Salvi, Kidger, Foster, and
  Lyons]{morrill2020neural}
James Morrill, Cristopher Salvi, Patrick Kidger, James Foster, and Terry Lyons.
\newblock {Neural Rough Differential Equations for Long Time Series}.
\newblock \emph{International Conference on Machine Learning (ICML)}, 2021.

\bibitem[Norcliffe et~al.(2021)Norcliffe, Bodnar, Day, Moss, and
  Lio]{norcliffe2021node}
Alexander Norcliffe, Cristian Bodnar, Ben Day, Jacob Moss, and Pietro Lio.
\newblock {Neural ODE Processes}.
\newblock \emph{International Conference on Learning Representations (ICLR)},
  2021.

\bibitem[Oksendal(2013)]{oksendal2013stochastic}
Bernt Oksendal.
\newblock \emph{{Stochastic Differential Equations: An Introduction with
  Applications}}.
\newblock Springer Science \& Business Media, 2013.

\bibitem[Papamakarios et~al.(2017)Papamakarios, Pavlakou, and
  Murray]{papamakarios2017masked}
George Papamakarios, Theo Pavlakou, and Iain Murray.
\newblock {Masked Autoregressive Flow for Density Estimation}.
\newblock \emph{Advances in Neural Information Processing Systems (NeurIPS)},
  2017.

\bibitem[Papamakarios et~al.(2021)Papamakarios, Nalisnick, Rezende, Mohamed,
  and Lakshminarayanan]{papamakarios2019normalizing}
George Papamakarios, Eric Nalisnick, Danilo~Jimenez Rezende, Shakir Mohamed,
  and Balaji Lakshminarayanan.
\newblock {Normalizing Flows for Probabilistic Modeling and Inference}.
\newblock \emph{Journal of Machine Learning Research (JMLR)}, 2021.

\bibitem[Rehfeld et~al.(2011)Rehfeld, Marwan, Heitzig, and
  Kurths]{rehfeld2011comparison}
Kira Rehfeld, Norbert Marwan, Jobst Heitzig, and J{\"u}rgen Kurths.
\newblock {Comparison of Correlation Analysis Techniques for Irregularly
  Sampled Time Series}.
\newblock \emph{Nonlinear Processes in Geophysics}, 2011.

\bibitem[Rezende and Mohamed(2015)]{rezende2015variational}
Danilo Rezende and Shakir Mohamed.
\newblock {Variational Inference with Normalizing Flows}.
\newblock \emph{International Conference on Machine Learning (ICML)}, 2015.

\bibitem[Rubanova et~al.(2019)Rubanova, Chen, and Duvenaud]{rubanova2019latent}
Yulia Rubanova, Ricky T.~Q. Chen, and David Duvenaud.
\newblock {Latent Ordinary Differential Equations for Irregularly-Sampled Time
  Series}.
\newblock \emph{Advances in Neural Information Processing Systems (NeurIPS)},
  2019.

\bibitem[Snelson et~al.(2004)Snelson, Rasmussen, and
  Ghahramani]{snelson2004warped}
Edward Snelson, Carl~Edward Rasmussen, and Zoubin Ghahramani.
\newblock {Warped Gaussian Processes}.
\newblock \emph{Advances in Neural Information Processing Systems (NeurIPS)},
  2004.

\bibitem[Verine et~al.(2021)Verine, Negrevergne, Rossi, and
  Chevaleyre]{verine2021expressivity}
Alexandre Verine, Benjamin Negrevergne, Fabrice Rossi, and Yann Chevaleyre.
\newblock On the expressivity of bi-lipschitz normalizing flows.
\newblock \emph{arXiv preprint arXiv:2107.07232}, 2021.

\bibitem[Wilson and Ghahramani(2010)]{wilson2010copula}
Andrew~G Wilson and Zoubin Ghahramani.
\newblock {Copula Processes}.
\newblock \emph{Advances in Neural Information Processing Systems (NeurIPS)},
  2010.

\bibitem[Zhang et~al.(2017)Zhang, Guo, Dong, He, Xu, and
  Chen]{zhang2017cautionary}
Shuyi Zhang, Bin Guo, Anlan Dong, Jing He, Ziping Xu, and Song~Xi Chen.
\newblock {Cautionary Tales on Air Quality Improvement in Beijing}.
\newblock \emph{Proceedings of the Royal Society A: Mathematical, Physical and
  Engineering Sciences}, 2017.

\bibitem[Zumbach and M{\"u}ller(2001)]{zumbach2001operators}
Gilles Zumbach and Ulrich M{\"u}ller.
\newblock {Operators on Inhomogeneous Time Series}.
\newblock \emph{International Journal of Theoretical and Applied Finance},
  2001.

\end{thebibliography}
}

\newpage
\appendix
\section{Stochastic Processes Representable by Continuous Time Flow Processes}
\label{append:A}
We characterize the class of stochastic processes that CTFP~\cite{deng2020modeling} can model and show that it does not include the widely used Ornstein-Uhlenbeck (OU) process. To this end, we first express the CTFP deformation of a Wiener as the solution of a stochastic differential equation (Lemma 1) and then demonstrate that this representation cannot match the drift term of an OU process (Theorem 1).

\begin{lemma}
\label{lemma:1}
Let $\mW_t$ be a Wiener process defined on a filtered probability space $(\bm{\Omega},\bm{\mathcal{F}}_t, P)$ for $0\leqslant t \leqslant T$ and $f(\vw,t)$ be a function satisfying the following conditions: 
\begin{itemize}
    \item $f(\vw,t)$ is twice continuously differentiable on $\mathbb{R}\times[0,T]$;
    \item for each $t$, $f(\vw, t):\mathbb{R}\rightarrow\mathbb{R}$ is bijective and Lipschitz-continuous w.r.t.~$\vw$.
\end{itemize}
  Let $f^{-1}(\cdot, t)$ denote the inverse of $f(\cdot, t)$. The stochastic process $\mX_t=f(\mW_t, t)$ is a solution to the stochastic differential equation (SDE)
\begin{equation}
    \textrm{d}\mX_t = \{\frac{\textrm{d}f}{\textrm{d}t}(f^{-1}(\mX_t, t), t) + \frac{1}{2}\Tr({\bf H}_{\vw}f(f^{-1}(\mX_t, t), t))\}\ \textrm{d}t + (\nabla_{\vw} f^T(f^{-1}(\mX_t, t), t))\ \textrm{d}\bm{W}_t, 
\end{equation}
with initial value $\mX_0=f(0, \bm{0})$.
\end{lemma}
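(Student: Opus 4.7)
The plan is to apply It\^{o}'s formula directly to $\mX_t = f(\mW_t, t)$ and then use bijectivity to rewrite the coefficients in terms of $\mX_t$.

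First, I would verify that the hypotheses on $f$ are precisely what It\^{o}'s formula requires: the twice continuous differentiability condition gives us that $\partial f/\partial t$, $\nabla_{\vw} f$, and ${\bf H}_{\vw} f$ all exist and are continuous, which is exactly the $C^{2,1}$ regularity needed to apply the standard multidimensional It\^{o} formula to a composition of $f$ with a Wiener process. The Lipschitz condition on $f(\cdot, t)$ additionally guarantees that $\nabla_{\vw} f$ is bounded (pointwise in each direction), so the stochastic integral below is well defined as an It\^{o} integral without needing localization.

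Second, I would apply It\^{o}'s formula to obtain
\begin{equation*}
\textrm{d}f(\mW_t,t) = \tfrac{\partial f}{\partial t}(\mW_t,t)\,\textrm{d}t + \nabla_{\vw} f(\mW_t,t)\,\textrm{d}\mW_t + \tfrac{1}{2}\Tr({\bf H}_{\vw} f(\mW_t,t))\,\textrm{d}t,
\end{equation*}
written in integral form over $[0,t]$, with initial value $f(\mW_0,0) = f(\bm{0},0)$ since the Wiener process starts at the origin.

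Third, I would use the assumed bijectivity of $f(\cdot,t)$ to substitute $\mW_t = f^{-1}(\mX_t, t)$ into every coefficient, producing the SDE stated in the lemma with drift $\partial_t f(f^{-1}(\mX_t,t),t) + \tfrac{1}{2}\Tr({\bf H}_{\vw} f(f^{-1}(\mX_t,t),t))$ and diffusion $\nabla_{\vw} f(f^{-1}(\mX_t,t),t)$. This substitution is legitimate pathwise: for each $\omega$ and each $t$, $\mW_t(\omega)$ is the unique preimage of $\mX_t(\omega)$ under $f(\cdot,t)$.

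The statement only asserts that $\mX_t = f(\mW_t,t)$ \emph{is a solution} to the stated SDE; I do not need to argue uniqueness, so no Yamada--Watanabe or Lipschitz-of-the-coefficients argument is required. Consequently the proof is essentially a one-line invocation of It\^{o}'s formula plus a change of variables, and the only place that needs care is verifying that the regularity hypotheses on $f$ genuinely line up with the hypotheses of the version of It\^{o}'s formula being cited (e.g. Chapter 4.2 of \cite{oksendal2013stochastic}); I expect this bookkeeping to be the only mild obstacle.
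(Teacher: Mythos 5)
Your proposal is correct and follows essentially the same route as the paper's own proof: a direct application of It\^{o}'s formula to $f(\mW_t,t)$ followed by the substitution $\mW_t = f^{-1}(\mX_t,t)$ justified by the bijectivity of $f(\cdot,t)$. Your additional remarks on why the $C^2$ and Lipschitz hypotheses make the It\^{o} formula and the stochastic integral well defined are a welcome bit of extra care that the paper leaves implicit.
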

\begin{proof}
Applying It\^{o}'s Lemma to $\mX_t=f(\mW_t, t)$, we obtain 
\begin{equation}
\label{eq:1}
    \textrm{d}f(\mW_t, t) = \{\frac{\textrm{d}f}{\textrm{d}t}(\mW_t, t) + \frac{1}{2}\Tr({\bf H}_{\vw}f(\mW_t, t))\}\ \textrm{d}t + (\nabla_{\vw} f^T(\mW_t, t))^T\ \textrm{d}\bm{W}_t,
\end{equation} where ${\bf H}_{\vw}f(\mW_t, t)$ is the Hessian matrix of $f$ with respect to $\vw$ and $\frac{\textrm{d}f}{\textrm{d}t}(\mW_t, t)$ and $\nabla_{\vw} f(\mW_t, t)$ are derivatives with respect to $t$ and $\vw$, all evaluated at $\mW_t, t$. Since $f(\vw, t)$ is invertible in $\vw$ for each $t$, we have $\mW_t = f^{-1}(\mX_t, t)$ and can rewrite Eq.(\ref{eq:1}) as
\begin{equation}
    \textrm{d}\mX_t = \{\frac{\textrm{d}f}{\textrm{d}t}(f^{-1}(\mX_t, t), t) + \frac{1}{2}\Tr({\bf H}_{\vw}f(f^{-1}(\mX_t, t), t))\}\ \textrm{d}t + (\nabla_{\vw} f^T(f^{-1}(\mX_t, t), t))\ \textrm{d}\bm{W}_t.
\end{equation}
\end{proof}
\begin{theorem}
Given a one-dimensional Wiener process $\mW_t$ on a filtered probability space $(\bm{\Omega},\bm{\mathcal{F}}_t, P)$ for $0\leqslant t\leqslant T$, a stochastic process $\mX_t=f(\mW_t, t)$ with $f$ satisfying the conditions of Lemma~\ref{lemma:1} cannot be the (strong) solution to the SDE of a one-dimensional Ornstein–Uhlenbeck (OU) process $\textrm{d}\mX_t = -\theta\mX_t\ \textrm{d}t + \sigma \textrm{d}\mW_t$ with the initial condition $\mX_0 = f(0, \bm{0})$ for any constants $\theta > 0$ and $\sigma>0$.
\end{theorem}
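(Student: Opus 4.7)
The plan is to assume for contradiction that $\mX_t=f(\mW_t,t)$ is a strong solution of the OU SDE and then, using Lemma~\ref{lemma:1}, to derive two It\^o representations of $\mX_t$ that must coincide; matching coefficients via uniqueness of the semimartingale decomposition will force $f$ into a form incompatible with the OU drift. Because the OU SDE has Lipschitz coefficients, both its strong solution and its continuous semimartingale decomposition are unique, so the martingale and bounded-variation parts from the two representations must agree.

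First, I would match the local-martingale parts. Equating quadratic variations yields
\[
\int_0^t (\partial_w f(\mW_s,s))^2\,\textrm{d}s = \sigma^2 t \qquad \text{for all } t\in[0,T] \text{ a.s.},
\]
so $(\partial_w f(\mW_s,s))^2 = \sigma^2$ for a.e.\ $s$ a.s. Combining the full support of $\mW_s$ on $\mathbb{R}$ for each $s>0$ with the continuity of $\partial_w f$ on the connected set $\mathbb{R}\times[0,T]$, I would upgrade this to the pointwise identity $\partial_w f(w,t) = \epsilon\sigma$ for some fixed sign $\epsilon\in\{+1,-1\}$. Integrating in $w$ then gives $f(w,t) = \epsilon\sigma w + g(t)$ for some $C^1$ function $g$.

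Next, I would substitute this affine form into the drift-matching step. Since $\partial_w^2 f\equiv 0$, the drift from Lemma~\ref{lemma:1} reduces to $g'(t)$, while the OU drift equals $-\theta\mX_t = -\theta(\epsilon\sigma\mW_t + g(t))$. Matching the finite-variation parts and rearranging gives the identity
\[
g'(t) + \theta g(t) = -\epsilon\theta\sigma\,\mW_t \qquad \text{a.s.\ for every } t\in[0,T].
\]
The left-hand side is a deterministic function of $t$, while the right-hand side is Gaussian with variance $\theta^2\sigma^2 t > 0$ whenever $t>0$. Since $\theta,\sigma>0$, this is the required contradiction, proving the theorem.

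The main obstacle is the technical step of lifting the coefficient-matching identity from ``almost surely along the trajectory'' to ``pointwise on $\mathbb{R}\times[0,T]$''. I would handle this via Fubini: the identity $(\partial_w f(\mW_s,s))^2 = \sigma^2$ holds on a set of full $ds\otimes dP$-measure, which combined with the everywhere-positive density of $\mW_s$ forces $(\partial_w f(w,s))^2 = \sigma^2$ for all $w\in\mathbb{R}$ and a.e.\ $s$; continuity in $s$ then extends this to all $s\in[0,T]$, and continuity in $w$ together with connectedness pins down the sign.
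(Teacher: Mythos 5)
Your proposal is correct and follows essentially the same route as the paper's proof: match the diffusion coefficient to force $f(w,t)=\pm\sigma w+g(t)$, then observe that the resulting drift cannot reproduce the state-dependent OU drift $-\theta\mX_t$. Your version is in fact slightly more careful than the paper's, since you justify the passage from an almost-sure identity along the trajectory to a pointwise identity in $(w,t)$ and you track the sign ambiguity in $\partial_w f$, both of which the paper glosses over.
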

\begin{proof}
We can obtain the stochastic differential equation of $\mX_t=f(\mW_t, t)$ using Lemma~\ref{lemma:1}.
For $\mX_t=f(\mW_t, t)$ to be the strong solution of an \emph{OU process}, it must satisfy
\begin{align}
\begin{split}
    \mX_t =& \mX_0 + \int_0^t-\theta \mX_s\ \textrm{d}s + \int_0^t\sigma\ \textrm{d}\mW_s\\
     =& \mX_0 + \int_0^t \frac{\textrm{d}f}{\textrm{d}t}(f^{-1}(\mX_s, s), s) + \frac{1}{2}\frac{\textrm{d}^2 f}{\textrm{d} \vw^2}(f^{-1}(\mX_s, s), s)\ \textrm{d}s + \int_0^t \frac{\textrm{d}f}{\textrm{d}\vw}(f^{-1}(\mX_s, s), s))\ \textrm{d}\bm{W}_s
\end{split}
\end{align}
almost surely. In particular, this implies that the drift and variance terms of $\mX_t$ must match the drift and variance terms of the OU process. We have $\forall (\vw, t) \in \mathbb{R}\times[0,T]: \frac{\textrm{d}f}{\textrm{d}\vw}(\vw, t)=\sigma$. As a result, $f(\vw, t)$ must take the form $\sigma \cdot \vw + g(t)$ for some function $g(t)$ of $t$. However, this form of $f(\vw, t)$ implies the value of $\frac{\textrm{d}f}{\textrm{d}t}(\vw, t) + \frac{1}{2}\frac{\textrm{d}^2 f}{\textrm{d} \vw^2}(\vw, t)$ is independent of $\vw$ as $\frac{\textrm{d}f}{\textrm{d}t}$ is a function of $t$ and $\frac{1}{2}\frac{\textrm{d}^2 f}{\textrm{d} \vw^2}$ is a constant. Therefore the drift term of the SDE of $\mX_t$ cannot match the drift term of the OU process.
\end{proof}

\section{Lipschitz Constraints in CTFP} 
\label{append:B}
We use the following example to qualitatively illustrate the challenges implied by the Lipschitz constraints of some popular normalizing flow models if we attempt to directly use them in CTFP~\cite{deng2020modeling} to transform a Wiener process $\mW_t$ to a geometric Brownian motion. Let $\mW_{t_i}$ be the marginal random variable induced by the base process $\mW_t$ at time point $t_i$ which follows a Gaussian distributions with parameters $\mu$ and $\sigma$ and $\mX_{t_i}$ be the random variable induced by the target GBM at time point $t_i$ which follows a log-normal distribution with parameters $\mu'$ and $\sigma'$. Without loss of generality we assume $\mu=\mu'=0$ and $\sigma=\simga'=1$. Consider a bi-lipschitz invertible mapping $f:\mathbb{R}\rightarrow\mathbb{R}$. By the change of variable formula we have $\log \vp(f(\mW_{t_i})) = \log \vp(\mW_{t_i}) - \log \left|\frac{\partial f(\mW_{t_i}) }{\partial \mW_{t_i}}\right| =-\frac{\mW_{t_i}^2}{2} - \log \left|\frac{\partial f(\mW_{t_i}) }{\partial \mW_{t_i}}\right| +\text{constant}$. $\log \vp(f(\mW_{t_i}))$ decreases at a rate of $\mathcal{O}(\mW_{t_i}^2)$ when $\mW_{t_i}\rightarrow\infty$ as $\log \left|\frac{\partial f(\mW_{t_i}) }{\partial \mW_{t_i}}\right|$ is bounded from both above and below. Directly evaluating $f(\mW_{t_i})$ using the density function of of log-normal distribution $\tilde{\vp}$, we get $\log \tilde{\vp}(f(\mW_{t_i})) = -f(\mW_{t_i}) -\frac{(\log \mW_{t_i})^2}{2} + \text{constant}$ which decreases at the rate of $\mathcal{O}(\mW_{t_i} + (\log \mW_{t_i})^2)$. This comparison indicates the tail behaviour of $ f(\mW_{t_i})$ can not match that of a log normal random variable as $\mW_{t_i}$ tends to infinity.

Many popular normalizing flow implementations that can be used as basic building blocks of CTFP including residual flows~\cite{chen2019residual, behrmann2019invertible} and neural ODE~\cite{chen2018neural} are bi-lipschitz mappings by design~\cite{verine2021expressivity}.

\section{Synthetic Dataset Specifications}
\label{append:C}
We compare our models against the baseline models using data simulated from four continuous stochastic processes: geometric Brownian motion (GBM), linear SDE (LSDE), continuous auto-regressive process (CAR), and stochastic Lorenz curve (SLC). We simulate the observations of GBM, LSDE, and CAR in the time interval $[0, 30]$ and the observations of SLC in the time interval $[0, 2]$. For each trajectory, we sample the observation time stamps from an independent homogeneous Poisson process with intensity $2$ (i.e., the average interarrival time of observations is $0.5$) for GBM, LSDE, and CAR and an intensity of $20$ (i.e., the average inter-arrival time of observations is $0.05$) for SLC. The observation values for geometric Brownian motion are sampled according to the exact transition density. The observation values of the LSDE, CAR, and SLC processes are simulated using the Euler-Maruyama method~\cite{bayram2018numerical} with a step size of $1e-5$. For each process, we simulate $10000$ sequences, of which $7000$ are used for training, $1000$ are used for validation and $2000$ are used for evaluation. We further simulated $2000$ sequences with denser observations using an intensity of $20$ for GBM, LSDE, and CAR and an intensity of $40$ for SLC. 

In the remainder of this section we provide details about the parameters of the stochastic processes:

\textbf{Geometric Brownian Motion.} The stochastic process can be represented by the stochastic differential equation $\textrm{d}\mX_t = 0.2\mX_t\ \textrm{d}t + 0.1\mX_t\ \textrm{d}\mW_t$, with an initial value $\mX_0=1$.

\textbf{Linear SDE.}
The linear SDE we simulated has the form $\textrm{d}\mX_t = (0.5\sin(t)\mX_t + 0.5\cos(t))\ \textrm{d}t + \frac{0.2}{1+\exp(-t)}\ \textrm{d}\mW_t$. The initial value was set to $0$.
 
\textbf{Continuous AR(4) Process.} A CAR process $\mX_t$ can be obtained by projecting a high-dimensional process to a low dimension.
This process tests our model's ability to capture non-Markov processes:
   \begin{align}
   \small
        \begin{split}
        \begin{split}
            \mX_t &= [1, 0, 0, 0]\mY_t, \\
             \textrm{d}\mY_t &= A\mY_t\ \textrm{d}t + e \ \textrm{d}\mW_t,
        \end{split}, \quad \mbox{ where } 
        A = \left[\begin{matrix}0 & 1 &0&0\\
        0&0&1&0\\
        0&0&0&1\\
        a_1&a_2&a_3&a_4
        \end{matrix}\right],\\
    e = &[0, 0, 0, 1], \ [a_1, a_2, a_3, a_4] = [+0.002, +0.005, -0.003, -0.002].
    \end{split}
    \end{align}
\textbf{Stochastic Lorenz Curve.} A stochastic Lorenz curve is a three-dimensional stochastic process that can be obtained by solving the stochastic differential equations
\begin{align}
        \begin{split}
            \textrm{d}\mX_t &= \sigma(\mY_t - \mX_t)\ \textrm{d}t + \alpha_x\ \textrm{d}\mW_t, \\
            \textrm{d}\mY_t &= (\mX_t(\rho - \mZ_t) - \mY_t)\ \textrm{d}t + \alpha_y\ \textrm{d}\mW_t, \\
            \textrm{d}\mZ_t &= (\mX_t\mY_t - \beta\mZ_t)\ \textrm{d}t + \alpha_z\ \textrm{d}\mW_t. \\
        \end{split}
    \end{align}
In our experiments we use $\sigma=10$, $\rho=28$, $\beta=8/3$, $\alpha_x=0.1$, $\alpha_y=0.28$, and $\alpha_z=0.3$.
\section{Real-World Dataset Specifications}
\label{append:D}
We use three real-world datasets to evaluate our model and the baseline models. We follow a similar data preprocessing protocol as~\cite{deng2020modeling}, which creates step functions of observation values in a continuous time interval from synchronous time series data.  
The sequences in each dataset are padded to the same length: $200$ for Mujoco-Hopper, $168$ for Beijing Air Quality Dataset (BAQD), and $650$ for PTB Diagnostic Database (PTBDB). The indices of observations are treated as real numbers and rescaled to a continuous time interval: for Mujoco-Hopper and BAQD we use the time interval $[0, 30]$, for PTBDB we use the time interval $[0, 120]$. We sample irregular observation time stamps from a homogeneous Poisson process of intensity $2$. For each sampled time stamp, we use the observation value of the closest available time stamp (rescaled from indices). As the baseline CTFP model~\cite{deng2020modeling} makes a deterministic prediction at $t=0$, we follow their steps and shift the sampled observation time stamps by $0.2$ after obtaining their corresponding observation values.
\section{Model Architectures and Experiment Settings}
\label{append:E}
We keep the key hyperparameters of our model and the baseline models in a similar range, including the dimensions of the recurrent neural networks' hidden states, the dimensions of latent states, and the hidden dimensions of decoders. We set the latent dimension to $2$ for geometric Brownian motion and linear SDE, $4$ for the continuous auto-regressive process, $3$ for the stochastic Lorenz curve, and $64$ for all real-world datasets. For all models that use a recurrent neural network, we use gated recurrent units with a hidden state of size $16$ for the synthetic datasets and $128$ for the real-world datasets.
\paragraph{Continuous Latent Process Flows (CLPF).}
We use a fully-connected network with two hidden layers to implement the drift $\bm{\mu}$ both for the prior and posterior SDE. To implement the variance function $\bm{\sigma}$, We use additive noise for experiments on synthetic datasets and a network with the same architecture as the drift $\bm{\mu}$ for real-world data experiments. The hidden layer dimensions for (prior SDE, posterior SDE) are $(32,32)$ for the synthetic datasets and $(128, 64)$ for the real-world datasets. We use a Gated Recurrent Unit (GRU) as the encoder of observation $\vx_{t_i}$s and latent states $\vz_{t_i}$s to produce $\phi_i$ at each step $i$ in Eq.(\ref{eq: post_sde}) in the main paper. The GRU takes the observation $\mX_{t_i}$, the latent state $\mZ_{t_i}$, the current and previous time stamps $t_i$ and $t_{i-1}$, and the difference between the two time stamps as inputs. The updated hidden state is projected to a context vector of size $16$ for the synthetic datasets and $20$ for the real-world datasets. The projected vector is concatenated with $\mX_{t_i}$ and $t_i$ as part of the input to the drift function $\bm{\mu}_{\phi_i}$ of the posterior process in the interval $[t_{i-1}, t_i]$.

We use five blocks of the generative variant of augmented neural ODE (ANODE)~\cite{deng2020modeling} or indexed residual flows~\cite{cornish2020relaxing} to implement the indexed normalizing flows in all synthetic and real-world experiments. In each ANODE block, the function $\vh$ in Eq.(\ref{eq:ivp}) in the main paper is implemented as a neural network with $4$ hidden layers of dimension $[8, 32, 32, 8]$ for the synthetic datasets and $[16, 32, 32, 16]$ for the real-world datasets; the funtion $g$ is implemented as a zero mapping. Indexed residual flows are only used in experiments on real-world data. Each residual flow block $f_i$ in Eq.(\ref{eq:ivp_res}) in the main paper is implemented using networks with the same hidden dimensions as ANODE for the real-world experiments. $u^{(i)}$ and $v^{(i)}$ for each block are obtained by projection of the latent state vector using a fully-connected network with $2$ hidden layers of dimension $[32, 32]$. Please refer to the code base for more details.

\paragraph{Continuous Time Flow Process (CTFP) and Latent CTFP.}
For CTFP~\cite{deng2020modeling} and the decoder of its latent variant, we also use $5$ ANODE blocks with the same number of hidden dimensions as CLPF. The encoder of the latent CTFP model is an ODE-RNN~\cite{rubanova2019latent}. The ODE-RNN model consists of a recurrent neural network and a neural ODE module implemented by a network with one hidden layer of dimension $100$. The default values in the official implementation\footnote{https://github.com/BorealisAI/continuous-time-flow-process} of latent CTFP are adopted for other hyperparameters of the model architecture.

\paragraph{Latent ODE.}
For the latent ODE model~\cite{rubanova2019latent}, we use the same encoder as latent CTFP. The latent ODE decoder uses a neural ODE with one hidden layer of dimension $100$ to propagate the latent state across a time interval deterministically. The latent state propagated to each observation time stamp is mapped to the mean and variance of a Gaussian observational distribution by a fully-connected network with $4$ hidden layers of dimension $[16, 64, 64, 16]$ for synthetic datasets and $[32, 64, 64, 32]$ for the real-world datasets. We use the default values in the official implementation\footnote{https://github.com/YuliaRubanova/latent\_ode} of latent ODE for other hyperparameters of the model architecture.

\paragraph{Latent SDE.}
In our implementation of latent SDE~\cite{li2020scalable}, we use the same architectures for the drift function $\bm{\mu}$ and variance function $\bm{\sigma}$ as CLPF. A GRU with the same hidden dimension is used to encode observation sequence $\vx_{t_i}$s and only outputs a single set of parameter $\phi$ for the drift function of the variational posterior process.
We use the same architecture for the decoder from latent states to observational distributions as latent ODE. 

\paragraph{Variational RNN (VRNN).}
The backbone of VRNN~\cite{chung2015recurrent} is a recurrent neural network. A one-layer GRU is used to implement the recurrent neural network. During inference, the hidden state is projected to the mean and variance of a Gaussian distribution of the latent state by a fully-connected network. During generation, the hidden state is directly mapped to the parameters of the latent distribution. The sample of the latent state is decoded to the parameters of an observational Gaussian distribution by a fully-connected network with $4$ hidden layers of dimension $[16,64,64,16]$ for the synthetic datasets and $[32, 64, 64, 32]$ for the real-world datasets. In the recurrence operation, the GRU takes the latest latent sample and observation as inputs. We also concatenate the time stamp of the current observation as well as the difference between the time stamps of the current and previous observation to the input.

\paragraph{Experiment Settings.}
For each dataset, we use $70\%$ of the sequences for training, $10\%$ of the sequences for validation, and $20\%$ of the sequences for testing. For the real-word datasets, we add Gaussian noise with standard deviation $1e-3$ to the training data to stabilize the training.
We use a batch size of $100$ for training on synthetic data and a batch size of $25$ for the real-world datasets.
For models optimized with an IWAE bound, we use $3$ samples of the latent state (trajectory) for training on synthetic datasets, $5$ samples for training on real-world datasets, $25$ samples for validation, and $125$ samples for evaluation.
To solve the latent stochastic process in the continuous latent process flow model, we use the Euler-Maruyama scheme with adaptive step size. The automatic differentiation engine of PyTorch is used for backpropagation. We train our models using one GPU (P100 / GTX 1080ti) on all datasets except PTBDB which uses 2 GPUs for training. The training takes approximately 80 hours for each synthetic dataset, except for CAR process which requires twice the time for training. The training time for real-world datasets is approximately 170 hours.

\paragraph{Qualitative Samples}
We presents more qualitative samples generated by latent ODE~\cite{rubanova2019latent}, latent SDE~\cite{li2020scalable}, CTFP~\cite{deng2020modeling}, and latent CTFP on a dense time grids with step gap of 0.01 as well as samples from VRNN models generated on time grids with step gaps of 0.2 amd 0.5 in Fig.~\ref{fig:more_sample}. The models are trained on BAQD~\cite{zhang2017cautionary} datasets.

\begin{figure*}
\vspace{-0.5cm}
    \centering
    \begin{subfigure}[b]{0.3\textwidth}
        \centering
        \includegraphics[width=\textwidth]{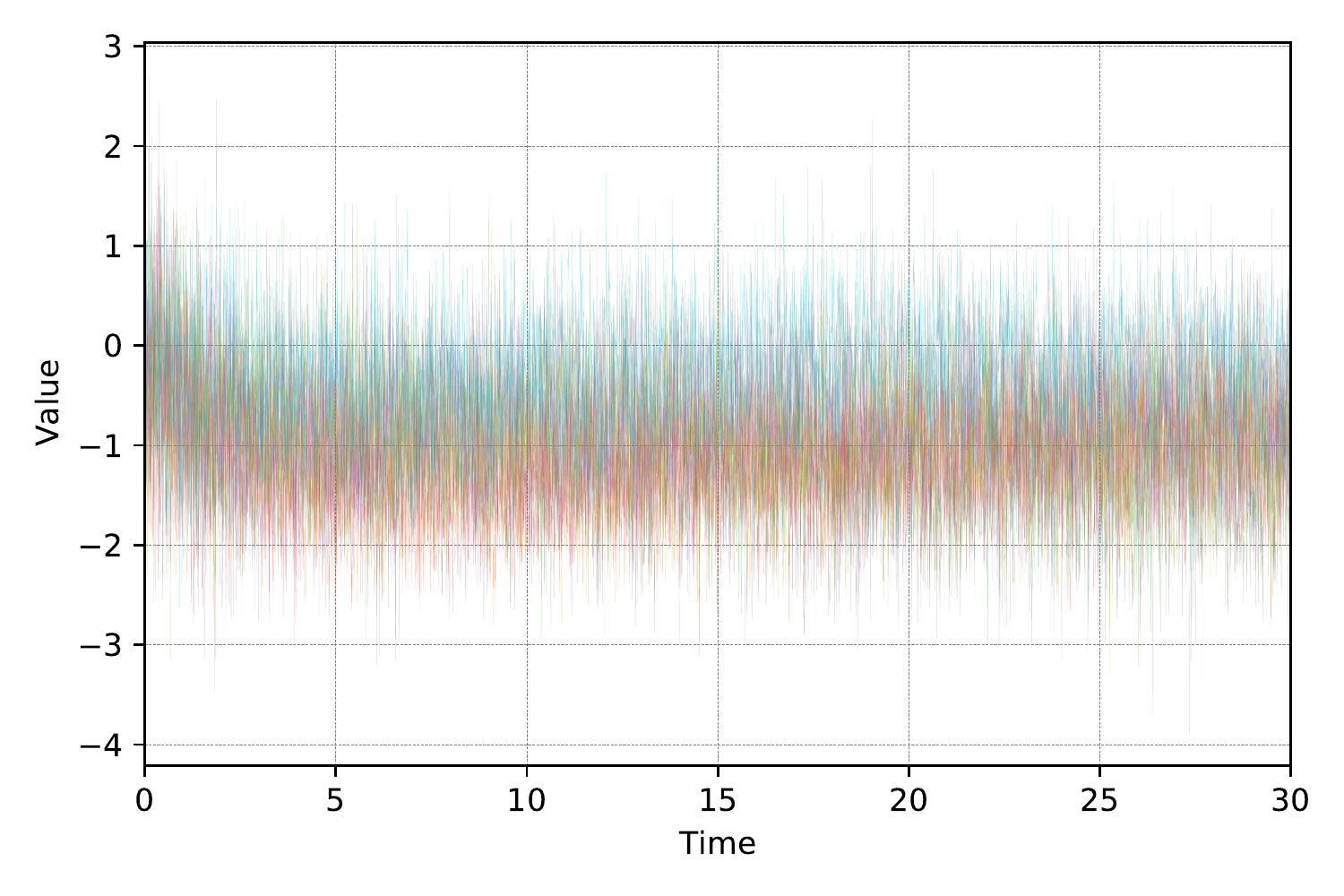}
        \caption{Latent ODE}
        \label{fig:lode}
    \end{subfigure}
    \quad
    \begin{subfigure}[b]{0.3\textwidth}
        \centering
        \includegraphics[width=\textwidth]{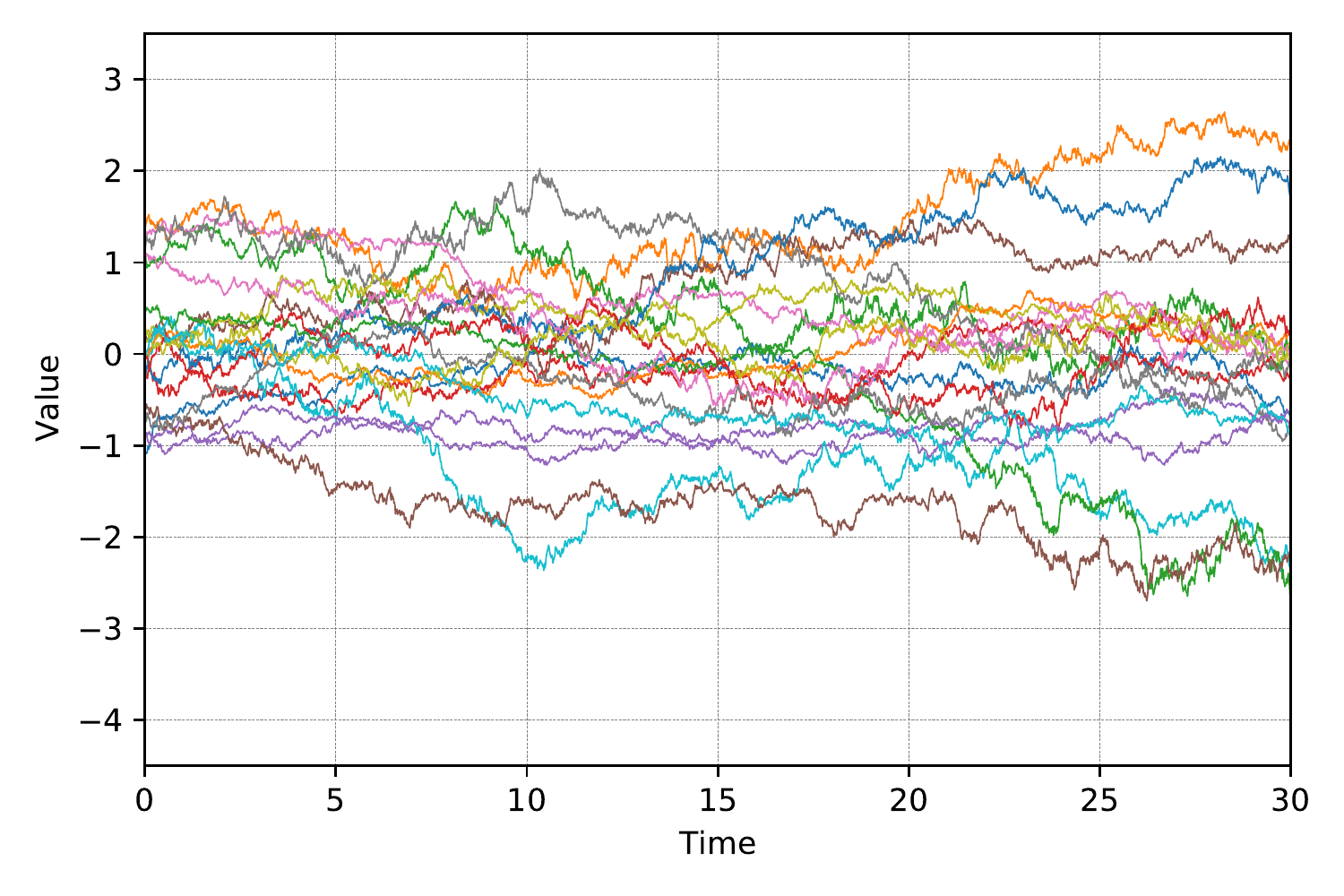}
        \caption{CTFP}
        \label{fig:ctfp}
    \end{subfigure}
    \quad
    \begin{subfigure}[b]{0.30\textwidth}
        \centering
        \includegraphics[width=\textwidth]{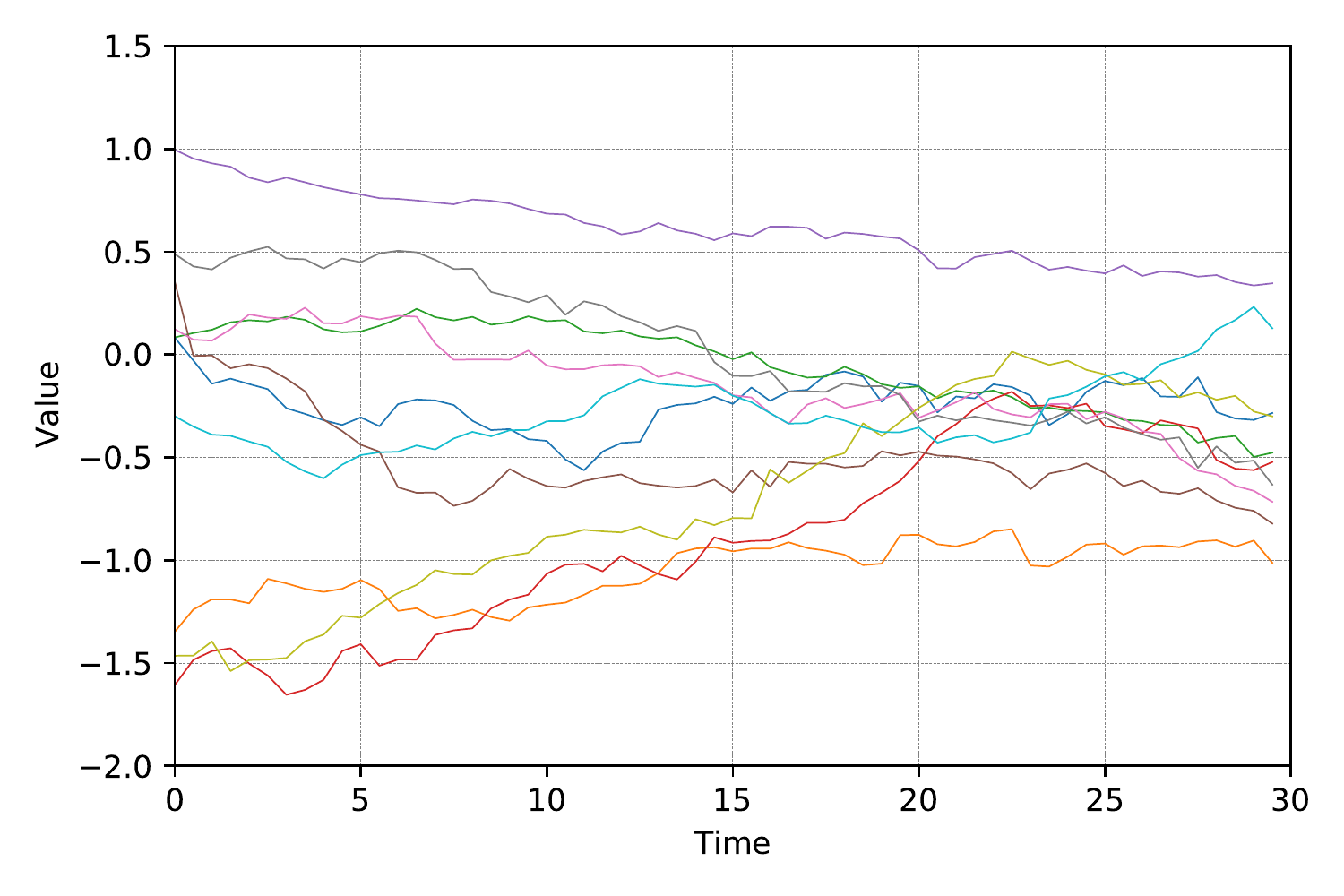}
        \caption{VRNN (0.2)}
        \label{fig:vrnn_05}
    \end{subfigure}
    \\
    \begin{subfigure}[b]{0.3\textwidth}
        \centering
        \includegraphics[width=\textwidth]{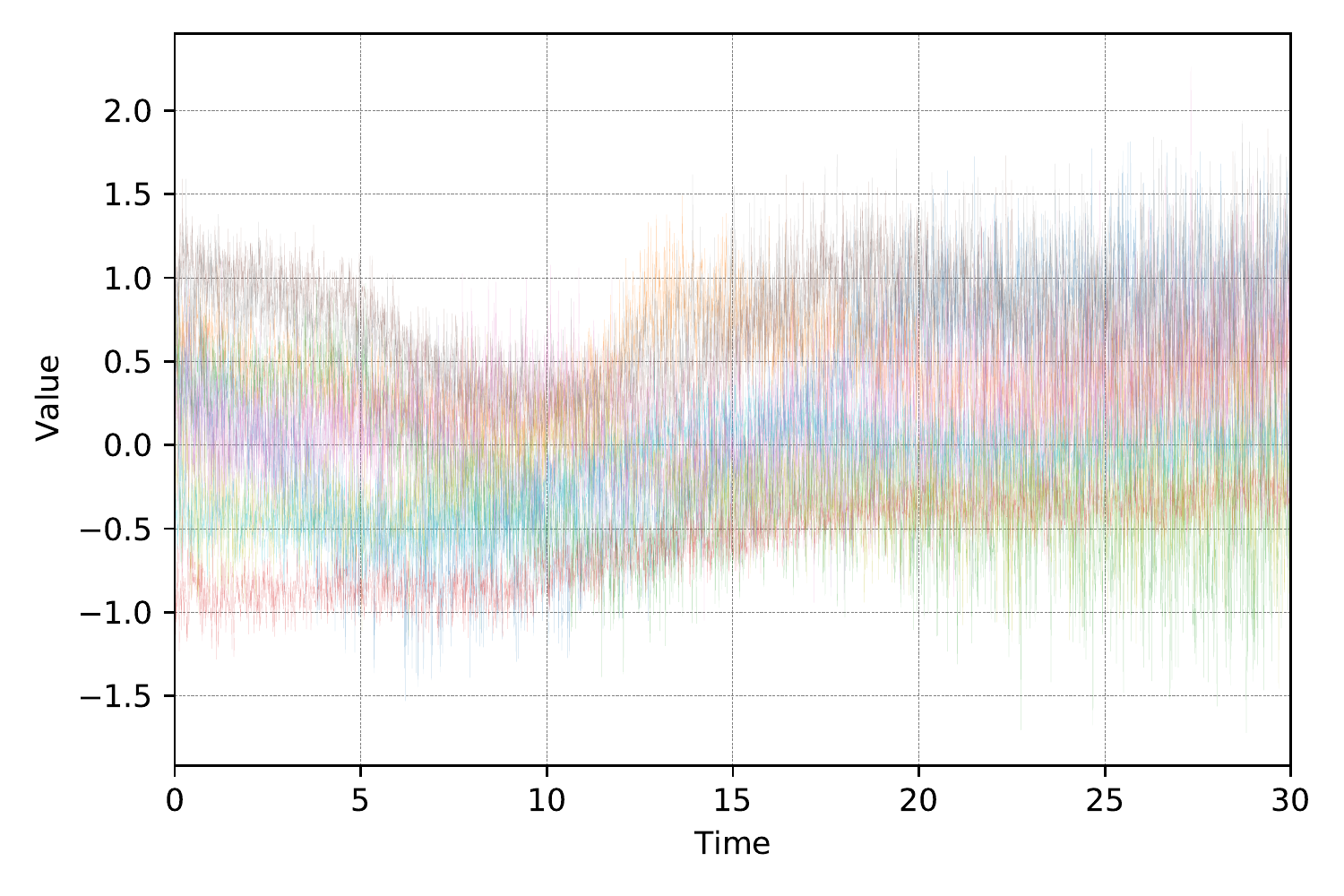}
        \caption{Latent SDE}
        \label{fig:lsde}
    \end{subfigure}
    \quad
    \begin{subfigure}[b]{0.3\textwidth}
        \centering
        \includegraphics[width=\textwidth]{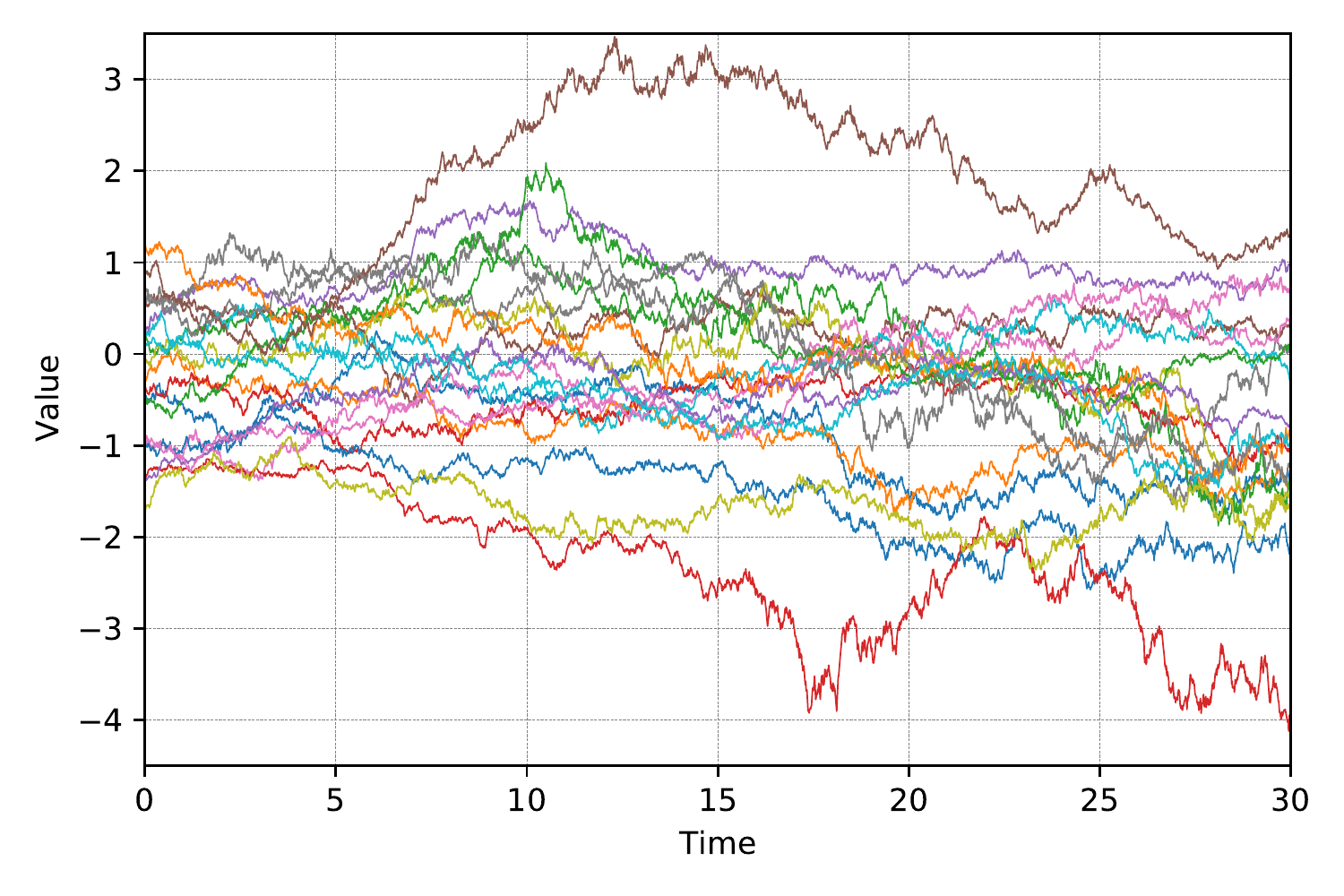}
        \caption{Latent CTFP}
        \label{fig:lctfp}
    \end{subfigure}
    \quad
    \begin{subfigure}[b]{0.30\textwidth}
        \centering
        \includegraphics[width=\textwidth]{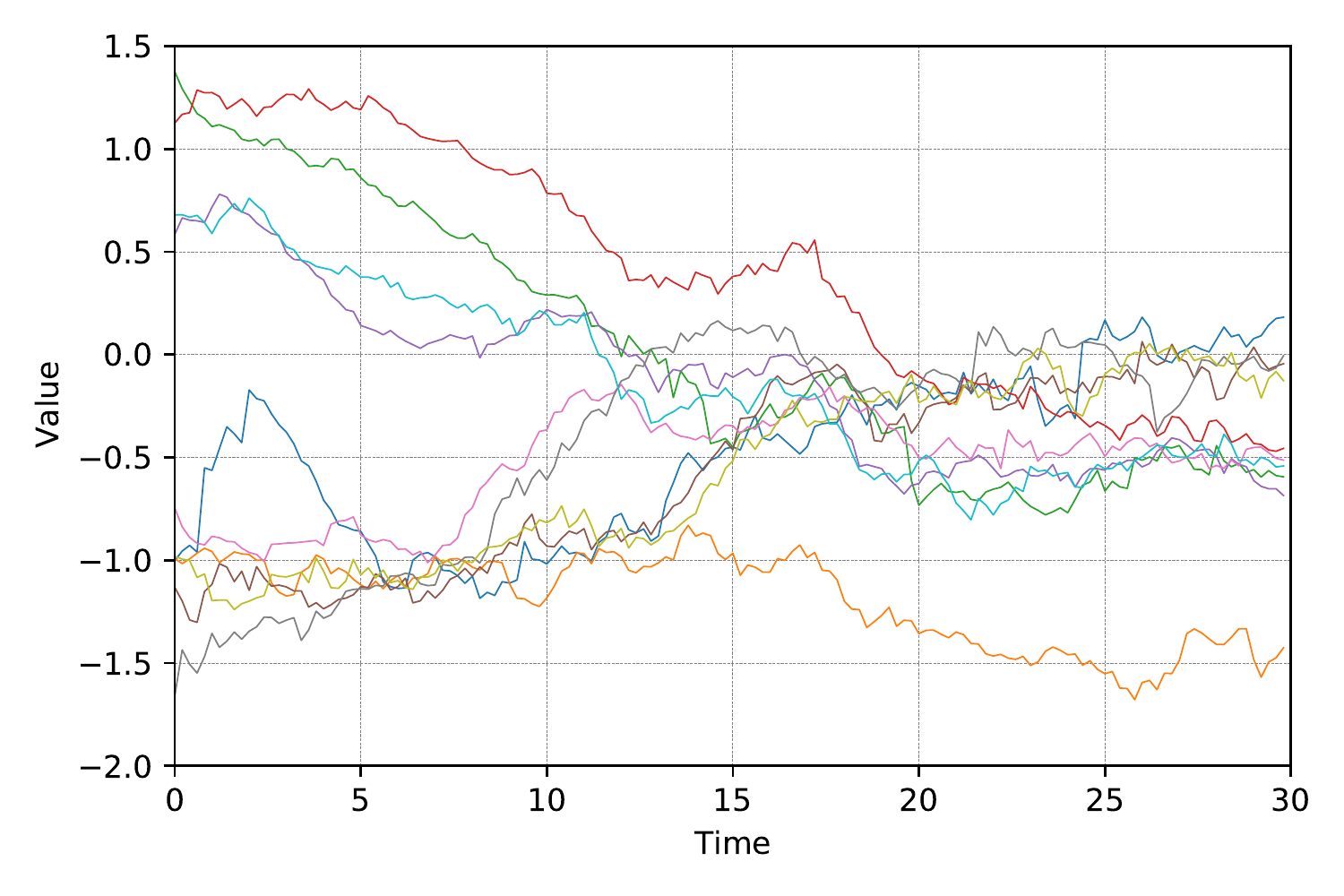}
        \caption{VRNN (0.2)}
        \label{fig:vrnn_02}
    \end{subfigure}
    \caption{ Qualitative samples from latent ODE, latent SDE, CTFP and latent CTFP on dense time grids with gap of 0.01 and VRNN Models on time grids with different gaps. The numbers in the parentheses in the captions of Fig.~\ref{fig:vrnn_05} and Fig.~\ref{fig:vrnn_02} indicate the gap of the time grids on which we sample observations from VRNN models.
    }
    \label{fig:more_sample}
    \vspace{-10pt}
\end{figure*}
\end{document}